\newcommand{\adv}{adv-ICL\xspace}
\newtheorem{prop}{Proposition}
\title{Prompt Optimization via Adversarial In-Context Learning
}
\author{%
  Xuan Long Do$^{1,3}$\footnotemark[1]\thanks{\;\;Equal contribution.}\;,\; Yiran Zhao$^{1}$\footnotemark[1]\;,\; Hannah Brown$^{1}$\footnotemark[1]\;,\; Yuxi Xie$^{1}$,\; James Xu Zhao$^{1}$, \\ \textbf{Nancy F. Chen$^{3}$,\; Kenji Kawaguchi$^{1}$,\; Michael Shieh$^{1}$\footnotemark[2]\thanks{\;\;Equal advising.}\;,\; Junxian He$^{2}$\footnotemark[2]} \\
  $^{1}$National University of Singapore, \\ 
  $^{2}$Hong Kong University of Science and Technology, \\ 
  $^{3}$Institute for Infocomm Research (I$^2$R), A*STAR \\
  \texttt{\{xuanlong.do, zhaoyiran, hsbrown, xieyuxi, xu.zhao\}@u.nus.edu,} \\
  \texttt{\{kenji, michaelshieh\}@nus.edu.sg,} \\
  \texttt{junxianh@cse.ust.hk, nfychen@i2r.a-star.edu.sg} \\
}
\begin{document}
\maketitle
\begin{abstract}

We propose a new method, Adversarial In-Context Learning (\adv{}\footnote{Our codes will available at \small{\url{https://github.com/zhaoyiran924/Adv-In-Context-Learning}.}}), to optimize prompts for in-context learning (ICL). Inspired by adversarial learning, \adv{} is implemented as a two-player game between a generator and discriminator, with LLMs acting as both. In each round, given an input prefixed by task instructions and several exemplars, the generator produces an output. The discriminator then classifies the generator's input-output pair as model-generated or real data. Based on the discriminator's loss, a prompt modifier LLM proposes possible edits to the generator and discriminator prompts, and the edits that most improve the adversarial loss are selected. We show that applying \adv{} results in significant improvements over state-of-the-art prompt optimization techniques for both open and closed-source models on $13$ generation and classification tasks including summarization, arithmetic reasoning, machine translation, data-to-text generation, and the MMLU and big-bench hard benchmarks. In addition, our method is computationally efficient, easily extensible to other LLMs and tasks, and effective in low-resource settings

\end{abstract}

\section{Introduction}

Generative Adversarial Networks (GANs) and adversarial learning~\citep{goodfellow2014generative} have driven significant progress across a range of domains, including image generation~\citep{goodfellow2014generative, radford2015unsupervised, arjovsky2017wasserstein}, domain adaptation~\citep{ganin2016domain, tzeng2017adversarial, xie2017controllable, louppe2017learning}, and enhancing model robustness~\citep{szegedy2013intriguing,biggio2013evasion,carlini2017towards,madry2018towards}. At its core, adversarial learning frames training as a minimax game between a \emph{generator} and a \emph{discriminator}. The generator aims to generate output realistic enough that the discriminator classifies it as real (i.e., not generated), while the discriminator aims to accurately differentiate between generator output and real training samples. After each round, the parameters of both models are updated based on an adversarial loss, and the process repeats. As the generator improves, the discriminator improves alongside it, finding ``weak spots" in generator output that may go undiscovered in non-adversarial training, ultimately resulting in better generator outputs.

Despite success in other domains, applying adversarial learning to pre-training LLMs is impractical due to the data and computational overheads associated with training two models. Particularly for novel tasks where data is often scarce, it is desirable to have methods that can improve model performance using limited data. In this work, we solve this problem by applying adversarial learning to \emph{in-context learning (ICL)}~\citep{brown2020language, chowdhery2022palm, touvron2023llama, beltagy-etal-2022-zero,liu2023pre}, which has shown to be an effective method to improve model performance with few training samples. Though, effective, ICL has shown to be sensitive to changes in prompts~\cite{deng2022rlprompt,pryzant2023automatic}. We introduce \emph{Adversarial In-Context Learning} (\adv{}), which applies insights from adversarial learning to prompt optimization for ICL. \adv{} keeps model parameters fixed and instead updates model prompts in an adversarial manner. This alleviates compute and data requirements, while still allowing improvements in model performance.

\begin{figure*}[ht]
\centering
\includegraphics[width=0.95\textwidth]{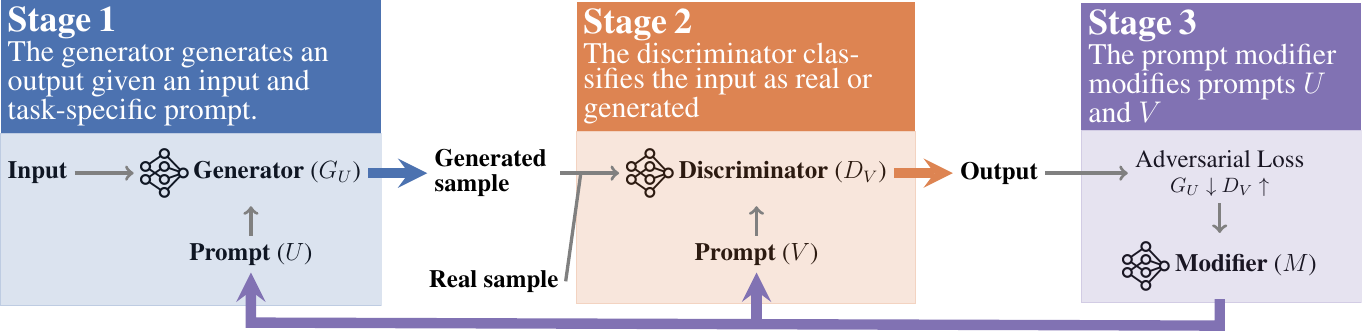}
  \vspace{-2mm}
  \caption{\adv{} orchestrates a minimax game between a \textit{Generator} and a \textit{Discriminator}, both powered by LLMs with few-shot prompts. The Generator crafts responses to unlabeled examples, while the Discriminator distinguishes between generated and ground truth outputs. Updates are made by a \textit{Prompt Modifier} which modifies prompts based on the adversarial loss.}
  
\label{fig:model}
\end{figure*}

\adv{} uses an adversarial objective and three main modules, implemented as LLMs, to optimize a model's prompt for a given task, as shown in Figure \ref{fig:model}.  The first module is a generator ($G$), which is tasked with generating realistic, task appropriate output given a task instruction and an input. The second is a discriminator ($D$) which has the goal of classifying its inputs as real or produced by $G$. Finally, there is a prompt modifier $M$ which is responsible for updating the prompts to $G$ and $D$. As in typical adversarial learning, the learning objective is set up as a minimax game between $G$ and $D$. In each round, $G$ produces an output based on an input and a prompt consisting of a task instruction and several example inputs and outputs. $D$ then classifies the pair constructed of the original input and $G$'s output as generated or real. Finally, $M$ produces a number of possible updates to $G$ and $D$'s prompts, the updates that most improve the adversarial loss from $D$'s classification are selected, and the procedure repeats. Through this iterative update procedure \adv{} is able to improve $G$'s prompt, improving task performance.

We evaluate \adv{} on $13$ tasks with various open and closed-source LLMs, finding that \adv{} outperforms other prompt optimization techniques by large margins across model configurations and tasks. For instance, we increase the accuracy of ChatGPT~\citep{openai2022chatgpt} from 71.0\% to 74.0\% on MMLU~\citep{hendrycks2020measuring}, 79.9\% to 82.3\% on GSM8K~\citep{cobbe2021training}, and 72.1\% to 74.0\%  on BBH~\citep{suzgun2022challenging}.  Importantly, \adv{} requires very few iterations and training samples, increasing performance significantly after only five rounds of training on twenty data points. Finally, \adv{} is easy to implement, encouraging its use in real-world applications.

\section{Adversarial In-Context Learning}

\begin{figure}[htb]
    \centering
    \includegraphics[width=0.62\linewidth]{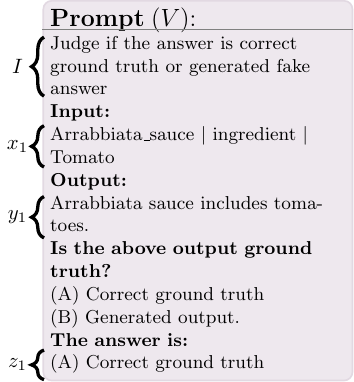}
    \caption{An example of a task prompt for the discriminator $D_V$ with prompt components labeled.}
    \label{fig:d_prompt}
\end{figure}


\subsection{Background: In-Context Learning}
Large Language Models (LLMs) \citep{brown2020language, chowdhery2022palm, touvron2023llama, openai2023gpt4} have demonstrated strong downstream task performance through conditioning on a small number of demonstrations in the input prompt, a paradigm referred as \emph{in-context learning (ICL)} \citep{beltagy-etal-2022-zero,liu2023pre}. ICL streamlines the adaptation of a general-purpose LLM to a specific task without the need for feature engineering or additional model training.
Formally, given a specific task, consider an LLM $G_U$ (the generator), driven by a prompt $U=(I^G, x^G_1, y^G_1, \cdots, x^G_k, y^G_k)$, where $I^G$ is the task instruction, $x^G_i$ is a sample input, and $y^G_i$ is the corresponding sample output. $G_U$'s output for a new input $x$, then, is determined by the instruction and the exemplars in $U$, making the choice of $U$ crucial in determining $G_U$'s downstream performance~\citep{deng2022rlprompt,pryzant2023automatic}.



\subsection{Adversarial Training Objective}
\adv{} optimizes the generator's prompt using an adversarial approach inspired by GANs~\citep{goodfellow2014generative}---in particular cGAN~\citep{mirza2014conditional} and BiGAN~\citep{donahue2016adversarial} where the discriminator deals with the conditional  and joint distribution of an input and output. As for GANs, it is essential to optimize both the discriminator and generator in the \adv{}~framework concurrently, to make sure  they reach a desired optimal state. To assess the output of $G_U$, we employ a discriminator, $D_V$, which attempts to classify $G_U$'s output as real or generated.

Like $G_U$, $D_V$ is an LLM driven by a prompt $V=(I^D, x^D_1, y^D_1, z^D_1, \cdots, x^D_k, y^D_k, z^D_k)$, where $I^D$ is a task instruction, $x^D_i$ a sample input, $y^D_i$ its corresponding output, and $z^D_i$ a label of ``real'' or ``generated'' representing whether $y^D_i$ is from a real sample or generated by $G_U$. $D_V$ uses a GAN-inspired loss function $\mathcal{J}$, formally defined as:

\begin{equation}
\begin{multlined}
\mathcal{J}(D_V, G_U)=\mathbb{E}_{x, y\sim p_{data}} \log \Big( D_V(x, y) \Big) \\+ \mathbb{E}_{x\sim p_{data}} \log\Big( 1 - D_V\big(x, G_U(x)\big) \Big) 
\label{eq:loss}
\end{multlined}
\end{equation}

where $p_{data}$ is the distribution of real data. Note that $D_V$ is designed for the binary decision problem of classifying the input as generated or real. As shown in \Cref{fig:d_prompt}, in our prompt, we represent the choices as two options: ``(A) real'' or ``(B) generated''. This allows us to evaluate the classification probability using the generation probability of option (A), where $D_V(x, y)=1$ indicates a real sample. Therefore, in order for $G_U$ to improve its performance, its goal is for $D_V$ to mis-classify its outputs as real as often as possible (i.e. minimizing $\mathcal{J}$). In contrast, $D_V$'s objective is to increase $\mathcal{J}$, indicating improved classification ability. Formally, this adversarial training objective can be expressed as the following minimax game: 

\begin{equation}
    \min_{U} \max_{V} \mathcal{J}(D_V, G_U)
\end{equation}

Since the discriminator is powered by a large language model with enough capacity, achieving the optimal solution for this minimax objective indicates that the generator's output, when paired with its input, is indistinguishable from a real sample.

\begin{algorithm*}[ht]
\renewcommand{\algorithmicrequire}{\textbf{Input:}}
\renewcommand{\algorithmicensure}{\textbf{Output:}}
    \caption{Adversarial In-Context Learning Optimization}
    \begin{algorithmic}[1]
    \REQUIRE $U=(I^G, x^G_1, y^G_1, \cdots, x^G_k, y^G_k)$, $V=(I^D, x^D_1, y^D_1, z^D_1, \cdots, x^D_k, y^D_k, z^D_k)$.
    \REQUIRE Generator $G_U$, Discriminator $D_V$, Prompt Modifier $M$. 
    \REQUIRE \textcolor{black}{Prompts for $M$ to sample new instructions or demonstrations $P_i/P_d$}.
    \REQUIRE Training iterations $T$, samples used per iteration $m$, number of new sampled prompts $r$.
    \REQUIRE Set of limited samples $S$
        \FOR {$T$ training iterations}
            \STATE Sample $m$ data points from $S$ to compute $J(G_U, D_V,m)$.
            \STATE \texttt{// Optimize the instruction $I^D$ for $D_V$} \\
            \STATE \textcolor{black}{Generate $r$ new instructions: \{$I_1$, $I_2$,...,$I_r$\} $= M(I^D, P_i)$}. 
            \STATE Substitute $I_n$ to $V$ $\forall n \in \{1,2,...,r\}$ to compute the loss $J_n(G_U, D_V,m)$
            \STATE $J_j = \max_n J_n(G_U, D_V, m)$
            \STATE Update $I^D$ by $I_j$ if $J_j > J$. 
            \STATE \texttt{// Optimize the demonstrations ($x^D_{i}, y^D_{i}, z^D_{i}$) $\forall i$ for $D_V$} \\
            \FOR {$i \in range(k)$}
               \STATE \textcolor{black}{Generate $r$ new demonstrations: \{($x_{i1}, y_{i1}, z_{i1}$),...,($x_{ir}, y_{ir}, z_{ir}$)\} $= M((x^D_{i}, y^D_{i}, z^D_{i}), P_d)$.} \\
               \STATE Substitute ($x_{in}, y_{in}, z_{in}$) to $V$ $\forall n \in \{1,2,...,r\}$ to compute the loss $J_{in}(G_U, D_V,m)$
               \STATE $J_{jn}=\max_i J_{in}$
               \STATE Update ($x^D_{i}, y^D_{i}, z^D_{i}$) by ($x_{ij}, y_{ij}, z_{ij}$) if $J_{jn} > J$.
            \ENDFOR
            \STATE \texttt{// Similarly optimize $U$ for $G_U$ so that $J(G_U, D_V,m)$ decreases.}
            \STATE ...
        \ENDFOR
    \ENSURE The optimized prompt $U$ for the Generator $G_U$.
    \end{algorithmic}
    \label{algo:adversarial}
\end{algorithm*}

\subsection{Adversarial In-Context Learning Optimization}\label{sec:adv-optimization}

Whereas GANs optimize model parameters with backpropagation, \adv{} does not update $G_U$ and $D_V$'s parameters, instead updating their prompts in each training iteration. This requires a number of differences in our optimization process. First, we consider a setting where we have access only to model outputs and generation probabilities, making it impossible to use backpropagation to update $U$ and $V$. Therefore, we employ a third LLM to serve as the \emph{prompt modifier}, $M$. Given a prompt's task instruction $I$ or demonstration $(x,y)$ as input, $M$ generates $r$ possible variations. The adversarial loss is recomputed for each variation by substituting the variation into the original prompt, and the modification that improves the adversarial loss the most is returned, following \citet{gonen2022demystifying}. 

We refer to our optimization algorithm as \emph{Adversarial In-Context Learning Optimization} (adv-ICL; \Cref{algo:adversarial}), which can be seen in pseudocode form in \Cref{algo:adversarial}. The entire process is as follows: Given the initial generator prompt $U$, and discriminator prompt $V$, we run $T$ training iterations. At each iteration, we first sample $m$ pairs of data points from our training samples to compute the adversarial training loss $\mathcal{J}(G_U, D_V,m)$. We then optimize the loss by using $M$ to modify both the task instruction and demonstration portions of the prompts for the discriminator and generator. 


\subsection{Theoretical Analysis} \label{sec:theoretical-results}
In this section, we present an analysis of whether a minimax objective can achieve equilibrium in in-context learning as is possible in the original GAN scenario. Let $p_{data}$ be the distribution of the training data, and $p_g$ be of the generated data from $G$. We assume $D$, $G$, and $M$ are models with infinite capacity and strong enough in-context learning capabilities, where the prompts powering $D$ and $G$ are iteratively updated using $M$ following \cref{algo:adversarial}. We further assume that: \emph{(i) $M$ is powerful enough to modify the initial prompt of $D/G$, covering all possible prompt variants; (ii) There exists a prompt $\mathcal{P}$ for $D/G$ that given $\mathcal{P}$, $D/G$ can achieve the globally optimal result; (iii) $M$ can generate $\mathcal{P}$ by which $D/G$  achieves the globally optimal result}. We prove the following:

\begin{prop}{(Motivated by \cite{goodfellow2014generative})} \label{prop:prop3}
If $G$ and $D$ have enough capacity, and at each training step, the discriminator is allowed to reach its optimum $D^*$ given $G$, and $p_g$ is updated so as to improve the criterion 

\begin{equation}
\begin{multlined}
\mathcal{J}(D^*, G)=\mathbb{E}_{x, y\sim p_{data}} \log \Big( D^*(x, y) \Big) \\ + \mathbb{E}_{x\sim p_{data}} \log\Big( 1 - D^*\big(x, G(x)\big) \Big) 
\end{multlined}
\end{equation}

then $p_g$ converges to $p_{data}$.
\end{prop}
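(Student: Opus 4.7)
The plan is to adapt the classical GAN convergence argument to the conditional, prompt-driven setting here. The key conceptual move is to use assumptions (i)--(iii), together with the infinite-capacity premise, to abstract away from the prompt parameterization and reason directly in the space of joint distributions $p_g(x,y) = p_{data}(x)\, p_G(y \mid x)$. Under these assumptions every distribution reachable by a sequence of prompt edits from $M$ can be identified with a point in this space, so the minimax problem reduces to a functional optimization amenable to the original Goodfellow-style analysis. Note that the $x$-marginal of $p_g$ automatically matches that of $p_{data}$, so matching the joints is equivalent to matching the conditionals $p_G(\cdot \mid x)$.

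First, for any fixed $G$, I would compute the optimal discriminator by writing
\[
\mathcal{J}(D,G) = \int \Big[ p_{data}(x,y) \log D(x,y) + p_g(x,y) \log\big(1 - D(x,y)\big) \Big] \, dx\, dy
\]
and maximizing pointwise. Since $t \mapsto a \log t + b \log(1-t)$ attains its unique maximum on $(0,1)$ at $t = a/(a+b)$, this immediately yields
\[
D^*(x,y) = \frac{p_{data}(x,y)}{p_{data}(x,y) + p_g(x,y)}.
\]
Assumptions (ii)--(iii) guarantee that a prompt realizing $D^*$ is reachable, and by (i) the modifier $M$ can find it, so the inner maximization in the algorithm attains this pointwise optimum at every step.

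Next I would substitute $D^*$ back into $\mathcal{J}$ to obtain a virtual criterion $C(G) = \mathcal{J}(D^*, G)$ depending only on $p_g$. A short manipulation, inserting factors of $2$ inside each logarithm and regrouping, rewrites it as
\[
C(G) = -\log 4 + \mathrm{KL}\!\left(p_{data} \,\Big\|\, \tfrac{p_{data} + p_g}{2}\right) + \mathrm{KL}\!\left(p_g \,\Big\|\, \tfrac{p_{data} + p_g}{2}\right) = -\log 4 + 2\,\mathrm{JSD}(p_{data} \,\|\, p_g).
\]
Since the Jensen--Shannon divergence is non-negative and vanishes iff its two arguments coincide, $C(G)$ attains its unique global minimum $-\log 4$ exactly at $p_g = p_{data}$.

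Finally, convergence of the iterates follows from the convexity of $p_g \mapsto C(G)$: the functional is convex in $p_g$ with a unique minimizer, and the update rule of \Cref{algo:adversarial} selects, by (i)--(iii), a modification that strictly improves $\mathcal{J}(D^*, G)$ whenever an improving direction exists. Combined with the global-minimum characterization above, this drives $p_g$ to $p_{data}$. The main obstacle, and the place where the argument truly departs from the original GAN setting, is justifying that discrete prompt edits proposed by $M$ induce a rich-enough family of updates on $p_g$ to realize any improving direction; assumptions (i)--(iii) are precisely what bridge this gap, letting us treat prompt-space search as effectively a search in distribution space, after which the convexity and Jensen--Shannon arguments close the proof.
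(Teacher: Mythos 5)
Your proposal is correct and follows essentially the same route as the paper's own proof: pointwise maximization giving the closed-form optimal discriminator $D^*(x,y) = p_{data}(x,y)/(p_{data}(x,y)+p_g(x,y))$, then convexity of $\mathcal{J}(D^*,G)$ in $p_g$ combined with assumptions (i)--(iii) to argue that prompt-space search by $M$ realizes the distribution-space optimum. In fact your write-up is slightly more complete than the paper's: you explicitly establish, via the Jensen--Shannon decomposition $C(G) = -\log 4 + 2\,\mathrm{JSD}(p_{data}\,\|\,p_g)$, that the global minimum is attained uniquely at $p_g = p_{data}$, a step the paper leaves implicit (deferring to \cite{goodfellow2014generative}) when it jumps from ``the global optimum of $G$ can be obtained'' to ``$p_g$ converges to $p_{data}$.''
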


The full proof for proposition~\ref{prop:prop3} can be found in \Cref{sec:theoretical-proof}. We conclude that with strong enough in-context learning LLMs $D,G,M$, \adv{} converges. In practice, convergence in adversarial training must be studied empirically. 


\subsection{Zero-shot Prompt Modification}

\begin{figure}[htb]
\centering
\includegraphics[width=\linewidth]{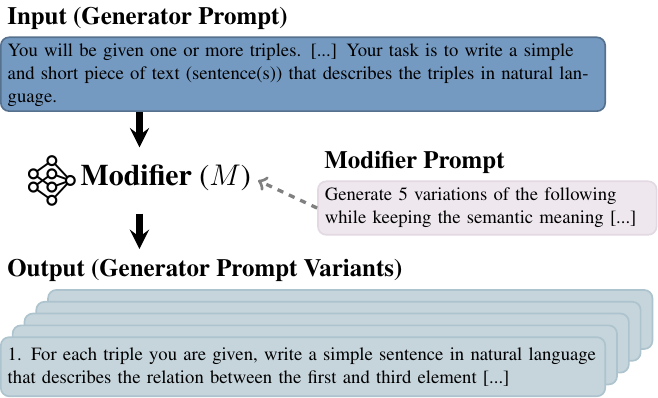}
  \caption{Example of how the prompt modifier generates new versions of $G_U$'s prompt $U$ including new task instructions and new data examples. Full prompts used for $M$ are in \Cref{appendix:prompts-for-prompt-modifier}. 
  }
\label{fig: modifier}
\end{figure}

We leverage LLM instruction-following abilities to generate $r$ variants of a task instruction/demonstration in a zero-shot manner. We use different prompt templates for generating instructions, open-ended question-answer pairs, and multiple-choice question pairs. \textcolor{black}{Specifically, suppose that $P$ is an input task instruction or demonstration (open-ended or multiple choice) to be modified and $M$ is the prompt modifier generating the variants, with its modifier prompt $P_M$. Then, $M$ is prompted to generate $r$ variants of $P$: $\{P_1,...,P_r\} = M(P, P_M)$.} An example is shown in \Cref{fig: modifier}, with full prompts in \Cref{appendix:prompts-for-prompt-modifier}. We also tested using successful prompts from previous optimizations as feedback for the next iteration (\Cref{appendix:extra-analysis}), following \citet{pryzant2023automatic,yang2023large}, but found that performance fell short compared to not integrating them.


\section{Experimentation}
\subsection{Experimental Setup} \label{subsec:experimental-setup}
\paragraph{Datasets.} We test \adv{} on $13$ traditional NLP tasks in four main categories: \emph{generation}, \emph{classification}, \emph{reasoning}, and challenging NLP \emph{evaluation suites}. For generation, we select XSUM~\citep{narayan2018don} and CNN/Daily Mail (CNN for short)~\citep{nallapati2016abstractive} as our \emph{text summarization} benchmarks; WebNLG~\citep{gardent2017creating} and E2E NLG~\citep{novikova2017e2e} as our \emph{data-to-text generation} datasets; and LIRO (RO → EN)~\citep{dumitrescu2021liro} and TED Talks (IT→ JA)~\citep{Ye2018WordEmbeddings} as our \emph{machine translation} benchmarks. In the classification category, we use YELP-5~\citep{yelp}, COPA~\citep{roemmele2011choice} and WSC~\citep{levesque2012winograd}. For reasoning tasks,  GSM8K~\citep{cobbe2021training} and SVAMP~\citep{patel2021nlp} are chosen as arithmetic reasoning benchmarks. Finally, we also evaluate our method on two challenging \emph{evaluation suites}: MMLU~\citep{hendrycks2020measuring} and BIG-bench Hard (BBH)~\citep{suzgun2022challenging}. 
Due to computational and budget limitations, except for GSM8K and SVAMP, each benchmark is evaluated on a maximum of 1,000  test samples randomly chosen from the test set. 
In our preliminary experiments, we found that the empirical results on the sampled test set is aligned with performance on the whole test set. 
The exact number of testing samples for each task is presented in \Cref{appendix:benchmark_statistics}.

A main advantage of ICL is that it can generalize to new tasks with limited training examples, as may be the case for novel tasks. To make our method applicable in such settings, we use 20 labeled samples during training. For our baseline methods, we assume access to at most 100 labeled data samples for each benchmark except BBH, similar to previous prompt optimization works~\citep{xu-etal-2022-gps, pryzant2023automatic}. For BBH, we assume access to three chain-of-thought data samples per task.

\paragraph{Backbone Models.} We test widely-used open and closed-source LLMs as our backbone models. For open-source, we use \emph{Vicuna-13B v1.5} \citep{zheng2023judging} ---a chat model fine-tuned on top of \emph{LLaMa-2} \citep{touvron2023llama2}. For closed-source, we use \emph{text-davinci-002} and \emph{ChatGPT (gpt-3.5-turbo-0613)}~\citep{openai2022chatgpt} built on top of \emph{GPT-3} \citep{brown2020language}. For each backbone model except ChatGPT, we use the same model for the generator, discriminator, and prompt modifier in the \adv{}~setup. Since ChatGPT does not provide the probabilities of its generated tokens, which is required for computing the adversarial loss, we employ \emph{text-davinci-002} as the discriminator, and ChatGPT is the generator and the prompt modifier.

\paragraph{Baselines.} We compare \adv{} with six baselines: (i) \emph{Few-shot} prompting, with Chain-of-Thought (CoT) \citep{wei2022chain} for reasoning tasks; (ii) 
Utilizing ROUGE-L score \citep{lin2004rouge} (\emph{ROUGE-L}) as the criteria to optimize the instruction and demonstrations  for each task on a small sampled labeled set; (iii) Similarly, using Perplexity (\emph{Perplexity}) as the criteria following \citet{gonen2022demystifying}; (iv) Genetic Prompt Search (\emph{GPS}) \citep{xu-etal-2022-gps}, a genetic optimization method based on the log-logits or accuracy; (v) Automatic Prompt Optimization (\emph{APO}) \citep{pryzant2023automatic}, which uses data to generate text “gradients” evaluating the current prompt, and then utilize them to signal the models to edit the prompt in the opposite semantic direction. {(vi) Automatic Prompt Engineer (APE) \citep{zhou2022large}, which automatically generates instructions and selects via evaluation scores.}\footnote{{As APE only polishes task instruction, we compare APE with Adv-ICL on GSM8K, MMLU and WebNLG.}}

We ensure that all methods use a similar number of labeled samples, while the exact number of training samples depends on the specific algorithm. For GPS and APO, we sample $32$ and $50$ labeled data examples for validation, following \citep{xu-etal-2022-gps, pryzant2023automatic}. For ROUGE-L and Perplexity, we sample $80$ data examples for validation. For YELP, WSC, GSM8K, SVAMP, where the benchmarks do not have enough labeled examples, we sample from their limited training set instead. APO requires additional training data for error samples. For fair comparisons, we use the same training data with {\adv{}}. More implementation details for baselines are presented in \Cref{appendix:baseline-implementation}.

\paragraph{Prompt Initialization.} 
We follow prior works to employ a set of initialized prompts. 
For MMLU and BBH, we employ the open-sourced prompts that come with the original papers. For GSM8K and SVAMP, we follow the chain-of-thought paper \citet{wei2022chain} which employs human-written prompts. For the remaining benchmarks, we utilize prompts from Super-NaturalInstructions~\citep{wang2022super}, in which instructions and demonstrations are chosen by domain experts. All the initial prompts are also used for our baseline \emph{few-shot} experiments. The exact number of shots used for each benchmark is presented in \Cref{appendix:number-of-shots}.

\paragraph{Evaluation Metrics.} For the generation tasks, we evaluate the performance by ROUGE-L score \citep{lin2004rouge}, following~\citet{wang2022super}. For classification tasks, we use accuracy as the evaluation metric. For MMLU and BBH, we follow \citet{hendrycks2020measuring, suzgun2022challenging} and report the averaged performance among tasks.

\paragraph{Hyperparameters.} Following the hyperparameter selection results in \Cref{sec:further-study}, we set number of training iterations $T=3$ and training samples per iteration $m=5$ for all tasks except BBH, where we set $T=3, m=3$ given that the training set contains only $3$ samples. In all experiments, the prompt modifier samples from $r=5$ prompts.



\subsection{Main Results}
We present the main empirical results on a set of classification, generation and reasoning tasks in \Cref{table:result_gen}, MMLU in \Cref{tab:chat_mmlu}, and BBH in \Cref{fig:chat_bbh}.

\begin{table*}
  \centering
\footnotesize
  \scalebox{0.72}{
  \begin{tabular}{cc|cccccc|ccc|cc}
    \toprule
  \multirow{2}*{\textbf{{Models}}} & \multirow{2}*{\textbf{{Method}}} & \multicolumn{2}{c}{\textbf{\normalsize{Summarization}}}  & \multicolumn{2}{c}{\textbf{\normalsize{Data-to-Text}}} & \multicolumn{2}{c}{\textbf{\normalsize{Translation}}} & \multicolumn{3}{c}{\textbf{\normalsize{Classification}}} & \multicolumn{2}{c}{\textbf{\normalsize{Reasoning}}} \\\specialrule{0em}{2pt}{2pt}
    &  & \textbf{XSUM} & \textbf{CNN} & \textbf{WebNLG} & \textbf{E2E NLG}  & {\textbf{LIRO}} & {\textbf{TED Talks}} & \textbf{YELP Review}  & \textbf{COPA}  & \textbf{WSC} & \textbf{GSM8K} & \textbf{SVAMP}\\
    \midrule
   \multirow{6}*{\begin{tabular}[c]{@{}l@{}} {\rotatebox{90}{text-davinci-002}} 
\end{tabular}}
   & Few-shot & 25.5 & 20.8 & 60.8 & 47.1 & 78.3 & 37.7 & 71.1 & 87.9 & 67.7 & 47.3  & 70.0 \\ 
   & ROUGE-L & 25.8 & 21.1 & 61.1 & 47.5 & 77.6 & 38.2 & 70.6 & 87.8 & 66.9 & 47.1 & 69.8 \\
   & Perplexity & 26.2 & 21.4 & 62.2 & 49.3 & 78.5 & 39.0 & 70.9 & 88.6 & 67.3 & 47.5 & 70.4 \\
   & GPS & 27.1 & 21.5 & 61.9 & 49.1 & 78.8 & 39.4 & 71.3 & 87.4 & 67.1 & 48.1 & 70.5 \\
   & APO & 26.8 & 22.1 & 62.3 & 49.2 & 78.9 & 40.2 & 71.1 & 88.8 & 68.3 & 46.9 & 69.3 \\
     & {\adv{}}  &  \textbf{30.9}\textcolor[RGB]{84,123,71}{$\uparrow$$3.8$} &  \textbf{23.4}\textcolor[RGB]{84,123,71}{$\uparrow$$1.3$}  & \textbf{65.4}\textcolor[RGB]{84,123,71}{$\uparrow$$3.1$} & \textbf{50.8}\textcolor[RGB]{84,123,71}{$\uparrow$$1.5$} & \textbf{81.2}\textcolor[RGB]{84,123,71}{$\uparrow$$2.3$} & \textbf{42.1}\textcolor[RGB]{84,123,71}{$\uparrow$$1.9$}  & \textbf{74.4} \textcolor[RGB]{84,123,71}{$\uparrow$$3.1$} &  \textbf{92.2} \textcolor[RGB]{84,123,71}{$\uparrow$$3.4$} & \textbf{73.8}\textcolor[RGB]{84,123,71}{$\uparrow$$5.5$} & \textbf{50.8} \textcolor[RGB]{84,123,71}{$\uparrow$$2.7$} & \textbf{72.5} \textcolor[RGB]{84,123,71}{$\uparrow$$2.0$} \\\midrule

 \multirow{6}*{\begin{tabular}[c]{@{}l@{}}{\rotatebox{90}{Vicuna v1.5}}
\end{tabular}} 
   & Few-shot & 18.9 & 16.4 & 52.5 & 35.3 & 72.1 & 32.6 & 71.0 & 77.8 & 54.4 & 40.7 & 45.1 \\
   & ROUGE-L & 18.9 & 16.6 & 52.7 & 35.2 & 72.6 & 32.9 & 70.9 & 76.7 & 54.1 & 40.4 & 44.8 \\
   & Perplexity & 19.1 & 16.9 & 52.8 & 35.0 & 72.7 & 33.0 & 71.0 & 77.9 & 54.7 & 41.4 & 46.2 \\
   & GPS & 19.7 & 16.9 & 53.0 & 35.9 & 73.2 & 33.0 & 71.3 & 78.2 & 55.0 & 41.7 & 45.7 \\
   & APO & 19.5 & 17.1 & 53.7 & 36.3 & 73.1 & 32.9 & 70.2 & 78.3 & 54.4 & 41.4 & 46.3 \\
   & {\adv{}}  & \textbf{21.1}\textcolor[RGB]{84,123,71}{$\uparrow$$1.4$} & \textbf{19.3}\textcolor[RGB]{84,123,71}{$\uparrow$$2.2$} & \textbf{59.3}\textcolor[RGB]{84,123,71}{$\uparrow$$5.6$} & \textbf{41.9}\textcolor[RGB]{84,123,71}{$\uparrow$$5.6$} & \textbf{73.4}\textcolor[RGB]{84,123,71}{$\uparrow$$0.2$} & \textbf{35.2}\textcolor[RGB]{84,123,71}{$\uparrow$$2.2$} & \textbf{73.6}\textcolor[RGB]{84,123,71}{$\uparrow$$2.3$} & \textbf{81.6}\textcolor[RGB]{84,123,71}{$\uparrow$$3.3$} & \textbf{58.2}\textcolor[RGB]{84,123,71}{$\uparrow$$3.2$}  & \textbf{43.9}\textcolor[RGB]{84,123,71}{$\uparrow$$3.2$} & \textbf{48.4}\textcolor[RGB]{84,123,71}{$\uparrow$$3.3$} \\
    \midrule
   \multirow{6}*{\begin{tabular}[c]{@{}l@{}} {\rotatebox{90}{ChatGPT}} \\
\end{tabular}} 
   & Few-shot & 25.2 & 21.3 & 60.9 & 48.3 & 78.8 & 41.7  & 69.8 & 94.4 & 69.8 & 79.4  &79.3 \\ 
   & ROUGE-L & 25.1 & 21.2 & 60.7 & 48.6 & 78.5 & 41.3 & 68.2 & 93.7 & 69.1 & 78.7 & 78.9 \\
   & Perplexity & 24.9 & 20.9 & 61.8 & 48.6 & 78.9 & 41.8 & 68.8 & 91.3 & 66.9 & 75.5 & 78.1 \\
   & GPS & 26.6 & 21.5 & 61.5 & 48.9 & 78.9 & 42.0 & 70.0 & 94.6 & 69.8 & 79.4 & 80.0 \\
   & APO & 27.1 & 22.1 & 61.5 & 49.3 & 79.4 & 42.3 & 70.3 & 94.8 & 70.1 & 79.9 & 79.7 \\
    & {\adv{}}  & \textbf{28.2}\textcolor[RGB]{84,123,71}{$\uparrow$$1.1$} &  \textbf{22.5}\textcolor[RGB]{84,123,71}{$\uparrow$$0.4$} & \textbf{63.6}\textcolor[RGB]{84,123,71}{$\uparrow$$1.8$} & \textbf{51.1}\textcolor[RGB]{84,123,71}{$\uparrow$$1.8$} & \textbf{80.4}\textcolor[RGB]{84,123,71}{$\uparrow$$1.0$}  & \textbf{43.2}\textcolor[RGB]{84,123,71}{$\uparrow$$0.9$} & \textbf{71.9}\textcolor[RGB]{84,123,71}{$\uparrow$$0.6$} & \textbf{95.8}\textcolor[RGB]{84,123,71}{$\uparrow$$1.0$} & \textbf{71.9}\textcolor[RGB]{84,123,71}{$\uparrow$$1.8$} & \textbf{82.3} \textcolor[RGB]{84,123,71}{$\uparrow$$2.4$} & \textbf{81.1} \textcolor[RGB]{84,123,71}{$\uparrow$$1.1$} \\
    \bottomrule
\end{tabular}}
  
\caption{
Main experimental results on generation, classification and reasoning tasks. Details of the selected few-shot prompts and the baselines are described in \Cref{subsec:experimental-setup}.
}
  \label{table:result_gen}
\end{table*}

\paragraph{Generation Tasks.}  
As shown in \Cref{table:result_gen},
{\adv{}} significantly outperforms all baseline methods across all backbone models, achieving $2.3\%, 2.9\%, 1.2\%$ average absolute improvements for text-davinci-002, Vicuna, and ChatGPT respectively. We observe that \adv{} achieves the most significant improvements for Summarization and Data-to-Text. Specifically, for \emph{text-davinci-002}, \adv outperforms the best baseline by 3.8\% on XSUM and 3.1\% on WebNLG. For Vicuna v1.5, \adv achieves an improvement of 5.6\% on the two data-to-text generation tasks WebNLG and E2E NLG. For ChatGPT, we achieve an improvement of 3.0\% on XSUM and 2.8\% on the E2E NLG generation task when compared to the vanilla few-shot baseline with no prompt optimization applied. We hypothesize that ChatGPT may obtain smaller absolute improvements when compared to other prompt optimization methods due to the misalignment between the backbone models of the generator and the discriminator. However, given that ChatGPT is the most widely used LLM and undergoes constant upgrades, it should be expected that improving ChatGPT is more difficult. 




\paragraph{Classification Tasks.} 
For classification tasks, {\adv{}} also brings significant improvements over all SOTA prompt optimization techniques across all models with $4.0\%, 2.9\%, 0.8\%$ average absolute improvements respectively. The most significant performance improvement is obtained using the \texttt{text-davinci-002} backbone. 
The 2.9\% improvements with Vicuna also illustrate the effectiveness of our proposed method on open-sourced models. 
The improvements of the three backbone models on classification tasks are relatively balanced. 

\paragraph{Reasoning Tasks.} For reasoning tasks, we observe a $2.7\%$ and $2.0\%$ absolute improvement on GSM8K and SVAMP, with text-davinci-002. Likewise, significant gains are observed with ChatGPT, achieving a $2.4\%$ increase on GSM8K and a $1.1\%$ boost on SVAMP. In the case of Vicuna, it achieves $3.2\%$ absolute improvement on GSM8K and $3.3\%$ absolute improvement on SVAMP. The effectiveness of \adv{} for reasoning tasks, particularly when coupled with CoT prompting, where the prompt includes detailed intermediate reasoning steps, demonstrates its ability to optimize complex prompts. This hints at potential for applying \adv{} to more advanced prompting methods.

\begin{table}[H]
\footnotesize
  \centering
  \setlength{\tabcolsep}{2.0pt}
  \scalebox{.75}{
  \begin{tabular}{lcccccc}
  \toprule
     & \textbf{Method} & \textbf{Humanity} & \textbf{STEM} & \textbf{Scocial Sciences} & \textbf{Others} & \textbf{Avg} \\ \midrule
\multirow{6}*{\rotatebox{90}{Vicuna v1.5}} & Few-shot &  $55.8$ & $38.7$ & $63.3$ & $61.5$ & $54.6$  \\ 
    & ROUGE-L & $55.5$ & $39.5$ & $63.7$ & $61.1$ & $55.0$ \\
& Perplexity & $55.2$ & $39.5$ & $64.1$ & $61.9$ & $55.2$ \\
 & GPS & $56.9$ & $40.4$ & $64.1$ & $62.3$ & $55.9$ \\
& APO & $57.2$ & $40.0$ & $63.7$ & $62.7$ & $55.9$ \\
  & {\adv{}} & $\mathbf{58.9}$ \textcolor[RGB]{84,123,71}{$\uparrow$$1.7$} & $\mathbf{44.1}$ \textcolor[RGB]{84,123,71}{$\uparrow$$3.7$} & $\mathbf{64.8}$ \textcolor[RGB]{84,123,71}{$\uparrow$$0.7$} & $\mathbf{64.5}$ \textcolor[RGB]{84,123,71}{$\uparrow$$1.8$} & $\mathbf{58.1}$ \textcolor[RGB]{84,123,71}{$\uparrow$$2.2$} \\
  \midrule
 \multirow{6}*{\rotatebox{90}{ChatGPT}} & Few-shot  & $73.9$  & $57.5$ & $79.2$ & $73.5$ & $71.0$  \\
    & ROUGE-L & $74.2$ & $56.7$ & $78.4$ & $73.9$ & $70.8$ \\
& Perplexity & $74.8$ & $56.3$ & $79.6$ & $71.2$ & $70.5$ \\
 & GPS & $74.6$ & $57.9$ & $80.0$ & $74.3$ & $71.7$ \\
& APO & $75.6$ & $58.3$ & $80.7$ & $73.9$ & $72.1$ \\
  & {\adv{}} & $\mathbf{76.7}$ \textcolor[RGB]{84,123,71}{$\uparrow$$1.1$} & $\mathbf{61.3}$ \textcolor[RGB]{84,123,71}{$\uparrow$$3.0$} & $\mathbf{82.3}$ \textcolor[RGB]{84,123,71}{$\uparrow$$1.6$} & $\mathbf{75.8}$ \textcolor[RGB]{84,123,71}{$\uparrow$$1.5$} & $\mathbf{74.0}$ \textcolor[RGB]{84,123,71}{$\uparrow$$1.9$} \\
    \bottomrule
  \end{tabular}}
\caption{Results of ChatGPT using 5-shot prompts on MMLU.
}
\label{tab:chat_mmlu}
\end{table}

\begin{figure}[ht]
    \centering
    \includegraphics[width=\linewidth]{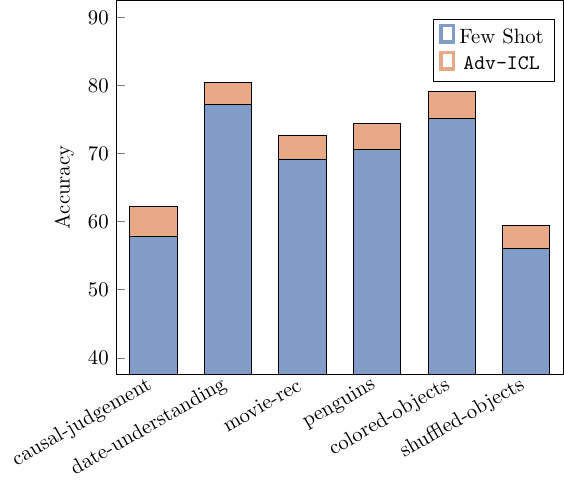}
    \caption{Results on selected tasks from BBH with ChatGPT using 5-shot Chain-of-Thought prompting. Full results can be found in \Cref{appendix:bbh_full} \
    }
    \label{fig:chat_bbh}
\end{figure}

\paragraph{MMLU \& BBH.} We summarize the results on MMLU in Table \ref{tab:chat_mmlu}. We improve the average performance from 69.8\% to 73.1\%, achieving performance improvements on 51 subjects out of 57 subjects with ChatGPT. 
For BBH, as shown in Figure \ref{fig:chat_bbh}, \adv{} achieves an accuracy of 70.6\% where the baseline method achieves an accuracy of 68.2\% with ChatGPT and chain-of-thought prompting. The detailed results on MMLU and BBH are in Appendix \ref{sec:apen_mmlu}.
Note that for BBH, only three data examples are provided with the dataset. Consequently, we use the same three examples as the initial data for both the generator and discriminator. Additionally, these 3 examples are the only real data examples utilized when estimating the objective. Despite this, we achieve substantial improvements on this task. This demonstrates the broad applicability of our method. In real-world scenarios with limited access to training samples our approach can still be effectively applied.

\section{Analysis} \label{sec:further-study}
In this section, we examine several design choices of \adv{}. {We further discuss the necessity of the discriminator in \Cref{sec:why-discriminator-works}, as well as an extended set of analyses in \Cref{appendix:extra-analysis}}.

\paragraph{Optimizing Instruction / Demonstration Only.} As instruction and demonstration data are both widely used in prompts, we examine the importance of optimizing both components. We use ChatGPT and compare our method with the prompt optimization method APE~\citep{zhou2023large}. We measure performance on WebNLG, GSM8K (with CoT), and MMLU. As shown in \Cref{fig:update_only_1_part}, we find that updating only the instruction or demonstrations makes the model perform suboptimally. Additionally, optimizing demonstrations is more effective than optimizing instructions for WebNLG and MMLU while the reverse is true for GSM8k. We hypothesize that this is because generated reasoning chains may contain errors and the correctness of the generated answers with respect to questions is critical for the model's performance \citep{min-etal-2022-rethinking}. That said, \adv{} achieves significant performance improvements for GSM8k in both cases.

 \begin{figure}[ht]
    \centering
    \includegraphics[width=\linewidth]{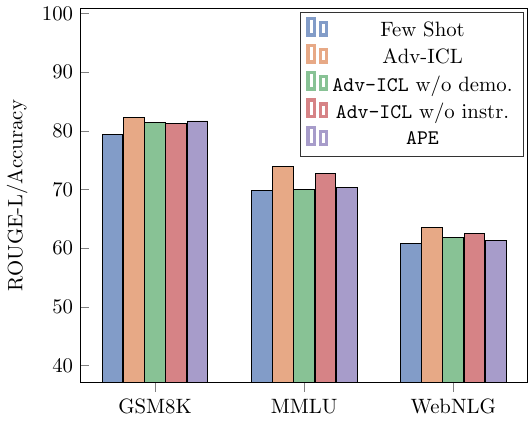}
    \caption{Ablation study on ChatGPT with {\adv{}} in which we only update the task instruction or demonstrations. 
    }
    \label{fig:update_only_1_part}
\end{figure}


\paragraph{Choosing Different Models for the Discriminator and Generator.} Given a generator, it is also important to answer how we can select a suitable discriminator to deploy our framework. In our main experiments, we chose the same model as the generator for all the base models, except ChatGPT. We hypothesize that since both discriminator and generator compete with each other in {\adv{}}, it is essential to balance their learning. To understand more whether we can use a discriminator different from the generator, we conducted experiments in \Cref{table:chatgpt_vicuna} dividing into two groups of using a stronger generator and a stronger discriminator in \adv{}. We observe with a stronger generator the performance is likely improved contrasting to with a stronger discriminator, the performance is potentially harmed. Overall, we suggest that the discriminator and generator should be chosen such that they are on the same performing level. A significant difference in their performance can drastically lower the overall framework's outcome. 


\paragraph{Ablation Studies on Number of Iterations and Data Samples.}

As shown in \Cref{algo:adversarial}, \adv involves three main hyperparameters: the number of training iterations $T$, the number of data points used per iteration $m$, and the number of new versions sampled for each instruction/demonstration $r$. We fix $r=5$ and analyze the best performing combination of $T$ and $m$ using grid search for $T\in\{1,3,5\}$ and $m\in\{1,2,5,10\}$. 


We measure \adv's performance on a validation set $S$ constructed from one representative task per category: WebNLG for generation, GSM8K for reasoning, and MMLU for classification. We use $80$ data samples from both WebNLG and GSM8K\footnote{GSM8K does not come with a validation set, so we sample from the training set instead.}. For MMLU, we sample $16, 16, 17, 19$ from the validation sets of \texttt{abstract\_algebra, business\_ethics, econometrics, formal\_logic} resulting in $228$ total samples in $S$.

We conduct experiments with ChatGPT and Vicuna as the backbone models. As shown in \Cref{table:hyper-parameter-search}, we observe the best performance with $T=3$ and $m=5$ for both settings. This demonstrates that our method works effectively without requiring many training iterations and data samples. \textcolor{black}{We further supplement $S$ with summarization and translation tasks ($80$ samples each) to conduct additional hyperparameter search experiments with Vicuna, as detailed in Appendix-\Cref{table:hyper-parameter-search-extra}. The results also indicate that the optimal settings are $T=3$ and $m=5$.} We provide an explanation of why training with too large $T$ or $m$ might harm model performance in \Cref{appendix:extra-analysis}.

\begin{table}
  \centering
  \begin{subtable}{.45\textwidth}
    \centering
    \scalebox{.65}{
      \begin{tabular}{lcccc}
        \toprule
        $m$ \textbackslash \ $T$ & $T=1$ & $T=3$ &  $T=5$ \\
        \midrule
$m=1$ & 61.3 / 78.8 / 42.6 & 63.8 / 80.0 / 47.1 & 62.5 / 80.0 / 48.5 \\ 
$m=3$ & 62.5 / 81.3 / 45.6 & 65.0 / 81.3 / 52.9 & 62.5 / 76.3 / 50.0 \\
$m=5$ & 63.8 / 82.5 / 54.4 & \textbf{66.3 / 82.5 / 55.9} & 63.8 / 77.5 / 54.4 \\
$m=10$ & 60.0 / 80.0 / 51.5 & 62.5 / 81.3 / 51.5 & 63.8 / 78.8 / 47.1 \\
    \bottomrule

    \end{tabular}
    }
    \caption{ChatGPT as $G$,  text-davinci-002 as $D$.}
    \label{tab:chatgpt-davinci2}
  \end{subtable}
  \hfill
  \begin{subtable}{.45\textwidth}
    \centering
    \scalebox{.65}{
      \begin{tabular}{lcccc}
        \toprule
        $m$ \textbackslash \ $T$ & $T=1$ & $T=3$ &  $T=5$ \\
        \midrule

$m=1$ & 52.5 / 40.0 / 50.0 & 53.8 / 43.8 / 55.9 & 53.8 / 42.5 / 54.4 \\
$m=3$ & 55.0 / 42.5 / 48.5 & 60.0 / 43.8 / 54.4 & 57.5 / 45.0 / 51.5 \\
$m=5$ & 55.0 / 41.4 / 48.5 & \textbf{61.3 / 45.0 / 54.4} & 57.5 / 42.5 / 51.5 \\
$m=10$ & 53.8 / 42.5 / 52.9 & 55.0 / 42.5 / 50.0 & 55.0 / 41.3 / 45.6 \\
\bottomrule
\end{tabular}
}
\caption{Vicuna as $G$,  Vicuna as $D$. }
\label{tab:vicuna-vicuna}
\end{subtable}
\caption{Ablation studies on number of iterations $T$ and number of samples used per iteration $m$. The results are ROUGE-L / Acc / Acc scores on WebNLG / GSM8K / MMLU.}
\label{table:hyper-parameter-search}
\end{table}

\paragraph{Qualitative Analysis.} 
For an intuitive understanding of how our prompt optimization progresses, we show examples of prompts for WebNLG changing over iterations in \Cref{appendix:extra-analysis}. The prompt modifier significantly alters the generator's prompt in two iterations by simplifying the instruction and adding a more specific requirement. The demonstrations are either replaced with a completely new one or are refined.

\section{Related Work}
\label{sec:related}
\paragraph{Adversarial Training.} 
Adversarial training has been widely used in image generation~\citep{goodfellow2014generative, radford2015unsupervised, arjovsky2017wasserstein}, domain adaptation~\citep{ganin2016domain, tzeng2017adversarial, xie2017controllable, louppe2017learning}, and improving model robustness~\citep{szegedy2013intriguing,biggio2013evasion,carlini2017towards,madry2018towards}. However, previous work shows that it often harms generalization \citep{raghunathan2019adversarial,min2021curious}. In NLP, there is an increasing interest in adversarial training; however, most current research focuses on its effect on generalization \citep{cheng-etal-2019-robust,wang2019improving,jiang2020smart}, and finetunes models \citep{jin2020bert,liu2020adversarial}, which is impractical for LLMs.  In contrast, \adv{} optimizes prompts and demonstrates strong generalization under different conditions.

\paragraph{Prompt Optimization.} In-context learning~\citep{liu2023pre} has sparked interest in prompt optimization (PO) techniques \citep{qin2021learning, deng2022rlprompt, lu2022dynamic, xu-etal-2022-gps, pryzant2023automatic, yang2023large, wang2024promptagent} for enhancing the performance of large language models (LLMs). Previous PO works fall into two categories: (1) continuous prompts; and (2) discrete textual prompts. Notable works optimizing continuous prompts include \citep{qin2021learning, liu2021gpt, lester2021power}. However, as model sizes increase, this approach becomes more computationally expensive. For very large language models, \citet{xu-etal-2022-gps} propose Genetic Prompt Search (GPS), a gradient-free prompt optimization method. Additionally, \citet{pryzant2023automatic} introduce Automatic Prompt Optimization (APO), utilizing text ``gradients" to evaluate and modify prompts. We compare {\adv{}} with GPS and APO. Other techniques like Automatic Prompt Engineer~\citep{zhou2023large} optimize only task instructions. We compare this with a variant of {\adv{}}. RL-based prompt optimization baselines like \citep{deng2022rlprompt, lu2022dynamic} are excluded due to involving additional MLP training and lacking a universal reward. Finally, PO algorithms have been recently developed to defend against jailbreaking attacks, for example, \cite{zou2023universal, zhu2023autodan, zhou2024robust}, but use different problem settings and are not directly comparable. \textcolor{black}{Finally, our method is indirectly related to demonstration selection works for ICL \cite{li-etal-2023-unified,yang-etal-2023-representative}. Since \adv{} does not focus on selecting the optimal combination and order of demonstrations, we do not compare our method with these approaches. However, demonstration selection algorithms can be developed concurrently and integrated with our method.}

\section{Conclusion}
In this work, we introduce \adv{}, an adversarial training framework for in-context learning using large language models. Our method has demonstrated empirical success across a diverse range of tasks and outperforms previous SOTA prompt optimization methods significantly. Effective with limited data samples and a very small number of training iterations, {\adv{}} holds promise for a wide array of real-world applications.

\section*{Limitations}
One limitation of our work is that \adv{} requires the component LLMs to follow human instructions well in performing their subtasks. However, we foresee that this limitation is going to be tackled by cutting-edge LLMs in the present and near future as LLMs are going to be more powerful. Additionally, choosing a good combination of \{Discriminator, Generator\} may require empirical experiments. In this work, we suggest that the same model can be used as both Discriminator and Generator. This offers strong performance as observed because both models are going to learn together well. However, in reality, many closed-source models like ChatGPT can be used as the Generator, but not the Discriminator. Choosing an optimal Discriminator in these cases requires deeper understanding as well as empirical experiments. We leave this exploration for future works.

\section*{Ethical Considerations} 
It is possible that this method could be used to optimize prompts for harmful purposes such as mis/disinformation generation, hatespeech, or privacy violating use cases. While this is not what our method is designed for, there is no way to prevent this type of misuse. While our method could also improve the efficiency and efficacy of bad actors, we do not anticipate that there is anything inherent to \adv allowing it to be more effective in these settings than in other, positive, settings. 

\section*{Acknowledgements}
\textcolor{black}{This research is partially supported by the National Research Foundation Singapore under the AI Singapore Programme (AISG Award No: AISG2-TC-2023-010-SGIL) and the Singapore Ministry of Education Academic Research Fund Tier 1 (Award No: T1 251RES2207). Do Xuan Long is supported by the A*STAR Computing and Information Science (ACIS) scholarship. We thank the anonymous reviewers for the constructive and helpful feedback.}

\bibliography{iclr2024_conference}
\bibliographystyle{iclr2024_conference}

\appendix

\section{Appendix}


\subsection{Theoretical Proofs of the Convergence} \label{sec:theoretical-proof}
In this section, we theoretically analyze whether such a minimax objective in the form of in-context learning can achieve the desired equilibrium as in the original GAN scenario. We assume access to models with infinite capacities powering the discriminator $D$, generator $G$, and prompt modifier $M$ and that in each iteration, we sample a sufficient number of prompts from $M$ to update both $G$ and $D$. Let $p_{data}$ be the distribution of the training data, and $p_g$ be the distribution of the generated data from $G$. 

Considering a language model which can be $D$ or $G$ with powerful enough in-context learning capabilities, given a task,  we further assume that:

\begin{enumerate}
  \item $M$ is powerful enough to modify the initial prompt of $D/G$, covering all possible prompt variants.
  \item There exists a prompt $\mathcal{P}$ for $D/G$ that given $\mathcal{P}$, $D/G$ can achieve the globally optimal result.
  \item $M$ can generate $\mathcal{P}$ by which $D/G$  achieves the globally optimal result.
\end{enumerate}

The assumption 3 is a result of assumptions 1, and 2, and the assumption about our access to infinite capacities language models. Indeed, given $D/G$, from assumption 2, there exists a globally optimized prompt $\mathcal{P}$ for it such that it can achieve the globally optimal state for a given task. Furthermore, since $M$ is powerful enough in modifying the initial prompt (ass. 1), plus $M$ samples a sufficiently large number of prompts for each iteration (ass. 2), $M$ can generate $\mathcal{P}$ with a non-zero probability, which concludes the assumption 3.

With the above assumptions, we prove the following results.

\begin{prop}{\citep{goodfellow2014generative}}\label{prop:prop1}
For $G$ fixed, the optimal discriminator $D$ can be described in a closed form, denoted as $D^*$.
\end{prop}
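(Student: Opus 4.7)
The plan is to adapt the classical GAN argument of \cite{goodfellow2014generative} to the conditional/joint formulation used in \adv{}. Fix $G$. The second term of $\mathcal{J}$ in Equation~(\ref{eq:loss}) is an expectation over $x \sim p_{data}$ with $y$ determined by $G(x)$, and so induces a joint distribution $p_g(x, y)$ on the input--output space; I would therefore rewrite both terms as integrals against $p_{data}(x, y)$ and $p_g(x, y)$ respectively, producing the single integral
\begin{equation*}
\mathcal{J}(D, G) \;=\; \int \big[\, p_{data}(x, y) \log D(x, y) \,+\, p_g(x, y) \log(1 - D(x, y)) \,\big]\, dx\, dy.
\end{equation*}

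Next I would maximize the integrand pointwise. For each $(x, y)$, let $a = p_{data}(x, y)$ and $b = p_g(x, y)$; the map $d \mapsto a \log d + b \log(1 - d)$ on $(0, 1)$ has derivative $a/d - b/(1 - d)$ and second derivative $-a/d^2 - b/(1 - d)^2 \le 0$, so it is concave, and its unique interior critical point $d = a/(a+b)$ is the global maximizer. This yields the closed form
\begin{equation*}
D^{*}(x, y) \;=\; \frac{p_{data}(x, y)}{p_{data}(x, y) + p_g(x, y)}
\end{equation*}
wherever the denominator is positive; on the set where both densities vanish, $D^{*}$ may be defined arbitrarily without changing $\mathcal{J}$.

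Finally I would argue that this pointwise optimizer is the global maximizer over all admissible discriminators. In the parametric GAN setting the justification is universal approximation; here the analogous justification comes directly from the assumptions stated in Section~\ref{sec:theoretical-results}: because $M$ can generate every prompt variant and because some prompt drives $D$ to its global optimum, the realizable discriminator class contains every measurable map from the input--output space to $[0, 1]$, so pointwise optimality transfers to global optimality of $\mathcal{J}(D, G)$ over $D$. The main obstacle I expect is not the calculus but cleanly articulating this realizability claim in the ICL setting, so that the pointwise argument carries over without gaps from the parametric GAN proof; once stated, the rest is the same two-line optimization.
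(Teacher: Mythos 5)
Your proposal is correct and follows essentially the same route as the paper's proof: rewrite the generator term as an expectation over the induced joint distribution $p_g$, collapse $\mathcal{J}(D,G)$ into a single integral, and maximize the integrand pointwise using the fact that $d \mapsto a\log d + b\log(1-d)$ peaks at $a/(a+b)$, yielding $D^{*} = p_{data}/(p_{data}+p_g)$. If anything, your write-up is more careful than the paper's, which integrates only over $x$ with marginal densities while $D$ takes arguments $(x,y)$, and which leaves the capacity/realizability step implicit where you make it explicit via the stated assumptions.
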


\begin{proof}[Proof for Proposition~\ref{prop:prop1}]
Adapted from \citep{goodfellow2014generative}: For a fixed $G$, the training objective for the discriminator $D$ is maximizing the adversarial loss $\mathcal{J}(D, G)$ (\Cref{eq:loss})


\[
\begin{split}
  \mathcal{J}(D, G) &=  \!\begin{multlined}[t]
       \mathbb{E}_{x, y\sim p_{data}} \log \Big( D(x, y) \Big) \\ + \mathbb{E}_{x\sim p_{data}} \log\Big( 1 - D\big(x, G(x)\big) \Big) 
     \end{multlined} \\
    &= \!\begin{multlined}[t]
       \mathbb{E}_{x, y\sim p_{data}} \log \Big( D(x, y) \Big) \\ + \mathbb{E}_{x,y\sim p_{g}} \log\Big( 1 - D\big(x, y\big) \Big)
     \end{multlined} \\
    &= \!\begin{multlined}[t]
       \int_{x} p_{data}(x) \log D(x, y) \Big) \,dx \\ + \int_{x} p_{g}(x) \log \Big( 1 - D\big(x, y\big) \Big) \,dx
     \end{multlined} \\
    &= \!\begin{multlined}[t]
       \int_{x} p_{data}(x) \log D(x, y) \\ +  p_g(x) \log \Big( 1 - D\big(x, y\big) \Big) \,dx
     \end{multlined}
   \end{split}
\]

The function $y = a\log(x) + b\log(1-x)$ for $(a,b) \in \mathbb{R}^2 and (a,b) \neq \{0,0\}$ achieves its maximum in $[0, 1]$ at $\frac{a}{a + b}$. Therefore, $D^*(x)$ has a closed form, which is $D^*(x) = \frac{p_{data}(x)}{p_{data}(x) + p_{g}(x)}$.

\end{proof}



\begin{prop}{(Motivated by \cite{goodfellow2014generative})} \label{prop:prop4}
If $G$ and $D$ have enough capacity, and at each training step, the discriminator is allowed to reach its optimum $D^*$ given $G$, and $p_g$ is updated so as to improve the criterion 

\begin{equation}
\begin{multlined}
\mathcal{J}(D^*, G)=\mathbb{E}_{x, y\sim p_{data}} \log \Big( D^*(x, y) \Big) \\ + \mathbb{E}_{x\sim p_{data}} \log\Big( 1 - D^*\big(x, G(x)\big) \Big) 
\end{multlined}
\end{equation}

then $p_g$ converges to $p_{data}$.
\end{prop}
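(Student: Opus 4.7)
The plan is to carry over the classical Goodfellow et al.\ convergence argument to the in-context learning setting, using Proposition~\ref{prop:prop1} as the inner-loop optimality result and leveraging assumptions (i)--(iii) to bridge the gap between prompt-space edits and distribution-space updates. First, I would invoke Proposition~\ref{prop:prop1} to substitute the closed form $D^*(x,y) = p_{data}(x,y)/\bigl(p_{data}(x,y)+p_g(x,y)\bigr)$ everywhere in the criterion, so that $C(G) := \mathcal{J}(D^*, G)$ becomes a functional of $p_g$ alone (with $p_{data}$ fixed).

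Next, I would rewrite $C(G)$ algebraically by splitting each $\log$ term, adding and subtracting $\log 2$, and recognizing the two resulting expectations as KL divergences against the mixture $(p_{data}+p_g)/2$, giving
\begin{equation*}
C(G) \;=\; -\log 4 \;+\; \mathrm{KL}\!\left(p_{data}\,\Big\|\,\tfrac{p_{data}+p_g}{2}\right) \;+\; \mathrm{KL}\!\left(p_g\,\Big\|\,\tfrac{p_{data}+p_g}{2}\right) \;=\; -\log 4 + 2\,\mathrm{JSD}(p_{data}\|p_g).
\end{equation*}
By non-negativity of KL (equivalently, of JSD), this yields $C(G) \geq -\log 4$ with equality if and only if $p_g = p_{data}$, pinning down the global minimum and its unique minimizer.

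For the convergence conclusion, I would note that $C(G)$ is convex in $p_g$ (each KL term is convex in its first argument, and the mixture in the second argument contributes a convex term as well), so sufficiently small monotone improvements in $C$ drive $p_g$ to the unique minimizer $p_{data}$. This is where the ICL-specific assumptions enter. The inner optimality of $D^*$ at each step is guaranteed by applying (i)--(iii) to the discriminator: $M$ can in particular propose the prompt $\mathcal{P}$ that drives $D$ to its optimum. The outer improvement of $p_g$ is then guaranteed by applying the same assumptions to the generator: any distributional update that decreases $C$ corresponds (by assumption (ii)) to some prompt $\mathcal{P}_G$, and (by assumptions (i) and (iii)) $M$ proposes this prompt with nonzero probability among the $r$ sampled candidates, so the best-of-$r$ update in \Cref{algo:adversarial} realizes the improvement.

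The main obstacle I expect is precisely this translation from prompt-space edits to distribution-space updates. In the original GAN proof, $p_g$ moves along a continuous gradient path in an assumed-sufficient parametric family, whereas here updates are discrete rewrites proposed by $M$ and there is no explicit step size. The proof must therefore lean heavily on the idealizations in assumptions (i)--(iii): unbounded model capacity, exhaustive coverage of prompt variants by $M$, and a sufficient number of sampled candidates per iteration to realize the globally optimal prompt with probability one. Under these idealizations, monotone improvement of $C(G)$ is guaranteed step by step, and convexity in $p_g$ upgrades this to convergence $C(G)\to -\log 4$ and hence $p_g \to p_{data}$; making the argument rigorous under weaker, practical conditions is left (as the paper notes) to empirical study.
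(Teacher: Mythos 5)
Your proposal is correct and follows the same overall route as the paper's proof: closed-form optimal discriminator (Proposition~\ref{prop:prop1}), convexity of $\mathcal{J}(D^*,G)$ in $p_g$, and assumptions (i)--(iii) to translate prompt edits by $M$ into the inner ($D\to D^*$) and outer ($p_g$ improvement) steps of the classical argument. The notable difference is that you are more complete than the paper at one point: you explicitly derive the decomposition
\begin{equation*}
\mathcal{J}(D^*,G) \;=\; -\log 4 + 2\,\mathrm{JSD}\!\left(p_{data}\,\|\,p_g\right),
\end{equation*}
which identifies $-\log 4$ as the global minimum and $p_g=p_{data}$ as its \emph{unique} minimizer. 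The paper's own proof skips this step entirely --- it asserts convexity in $p_g$ and that ``the global optimal of $G$ can be obtained,'' and then concludes $p_g \to p_{data}$, implicitly relying on Goodfellow et al.'s Theorem~1 to know that the global optimum is in fact $p_g = p_{data}$. Without your JSD step, convergence to the optimum would only give convergence to \emph{some} minimizer, not to $p_{data}$ specifically, so your version closes a genuine gap in the paper's write-up. Both proofs share the same residual informality, which you correctly flag: monotone improvement of a convex functional via discrete best-of-$r$ prompt substitutions does not by itself guarantee convergence to the minimizer (one can stall), and both the paper and your argument paper over this with the idealized assumptions about $M$'s coverage and ``sufficiently many'' sampled prompts and iterations.
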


\begin{proof}[Proof for Proposition~\ref{prop:prop4}]
At each training step, the optimal $D^*$ can be achieved via editing its input prompt by $M$. Considering the loss function $\mathcal{J}(D^*, G)$ as a function in $p_g$, then $\mathcal{J}(D^*, G)$ is convex in $p_g$. Since $G$ is powerful enough that there exists a prompt $\mathcal{P}$ sampled by $M$ such that $G$ can achieve the globally optimal loss $\mathcal{J}$ (assumption 2), with an optimal $D^*$, we can obtain the corresponding best $G$. Furthermore, $\mathcal{J}(D^*, G)$ is convex in $p_g$, plus the global optimal of $G$ can be obtained, with a sufficiently large enough number of prompts sampled and training iterations, $p_g$ converges to $p_{data}$.
\end{proof}


\subsection{Baseline Implementation} \label{appendix:baseline-implementation}
In this section, we present our implementation details for the baselines. First, among the benchmarks we used, the following datasets do not have any validation set with sizes larger than or equal to $80$: YELP, WSC, GSM8K, SVAPM. Therefore, we randomly sample $100$ data cases from their training sets, to create their validation sets. 

Each baseline requires a development set to decide which prompt(s) is/are the best at each optimization iteration. For GPS and APO, we sample $32$ and $50$ data samples respectively from the validation set of each benchmark, following \cite{xu-etal-2022-gps, pryzant2023automatic}. For ROUGE-L and Perplexity, we sample $80$ data samples, also from each validation set.  Additionally, among the baselines, only APO requires training data for error messages. For a fair comparison with \adv{}, we use the same training data samples with {\adv{}} as training data for APO. 

\paragraph{ROUGE-L \& Perplexity \citep{gonen2022demystifying}.} For these baselines, we utilize ROUGE-L \citep{lin2004rouge} or Perplexity \citet{gonen2022demystifying} as the measurement to optimize the input instruction and demonstrations sequentially.
For the instruction, we sample $15$ new instructions by paraphrasing following the template: \texttt{'Write for me 15 paraphrases of the \{initial\_instruction\}:'}. We then select the version which achieves the best result on $S$ as the final instruction. Similarly, for each demonstration, we use the template \texttt{'Write 15 paraphrases for the following example. Keep the format as Input: and Output:. End the answer by So the answer is:'} to sample $15$ versions of the original demonstrations, and select the best one on $S$ sequentially until all the demonstrations are optimized.  We sample $15$ versions for comparisons because our proposed {\adv{}} also samples a maximum of $15$ versions for the instruction and each demonstration. 

\paragraph{GPS \citep{xu-etal-2022-gps}.} We run GPS \citep{xu-etal-2022-gps} on $3$ iterations to optimize the instruction and each demonstration sequentially. 
Denote the original instruction/demonstration to be optimized as $O$. 
In the initial step, given the original human-written $O$, we paraphrase it into $10$ versions using \texttt{'Write for me 10 paraphrases of the \{initial\_instruction\}:'} for instruction, and \texttt{'Write 10 paraphrases for the following example. Keep the format as Input: and Output:. End the answer with <END>. So the answer is:'} for demonstration. We then select the top-$5$ generated $O$ to pass to the first iteration. At each iteration, for each $O$ in the current top$-5$ $Os$, we sample $5$ new $Os$ by Sentence Continuation strategy \citep{schick-schutze-2021-generating} via using the backbone LLM itself, and select the top$-5$ $Os$ among $25$ $Os$ to the next iteration. Finally, the best-performing $O$ on $S$ is selected as the output instruction/demonstration of the method. It is worth noting that in the original paper from \cite{xu-etal-2022-gps}, top$-k$ with $k=25$ was used. However, in our reimplementation, we use $k=5$ so that it can be relatively fair to compare GPS with our method (we use $r=5$) and other baselines. The template for sampling new prompts via the Sentence Continuation strategy that we used is exactly the same as \citet{xu-etal-2022-gps} provided.

\paragraph{APO \citep{pryzant2023automatic}.} Since our setting assumes that we have access to limited training data samples, we reimplemented a simplified version of the original APO in which the selection step \citep{pryzant2023automatic} only be called once, and the samples that we used to train {\adv{}} are returned. For simplicity, we call the original instruction/demonstration as $O$. We run APO to optimize the instruction and each demonstration sequentially in a given prompt. Given an initial $O$, and the error samples, we use the backbone LLM to generate feedback consisting of $5$ comments as the text "gradient". Integrating this gradient as feedback, we ask the LLM to generate $10$ prompt samples. We further utilize the backbone LLM to generate $5$ paraphrase versions of the original $O$, resulting in a total of $15$ new $Os$. Finally, we select the best $O$ evaluated on $S$. All the prompt templates for generating gradients, integrating feedback, and generating paraphrased prompts are adopted from \cite{pryzant2023automatic}. For selecting error samples, in the original implementation, \citet{pryzant2023automatic} compared the generated answer with the ground-truth answer, and the error samples are the ones that have the generated answer different from the ground-truth answers. This is applicable for classification and numerical question-answering tasks, but not the text generation tasks such as summarization, this strategy of selecting error samples is not suitable. Therefore, for summarization, data-to-text, and translation tasks, we select one sample that the current prompt brings the lowest ROUGE-L score as the sole error sample. 

\paragraph{APE \citep{zhou2023large}.} For APE, we adopt the implementation on the GitHub\footnote{github.com/keirp/automatic\_prompt\_engineer/tree/main} from \cite{zhou2023large}. We limit the number of instructions sampled to $15$ to have fair comparisons with {\adv{}}. For the training samples for each task, we use the same samples that we train {\adv{}} for APE.

\subsection{Supplementary Experiment Details}
In this section, we provide more details used in the experiments.

\paragraph{Number of Demonstrations for Few-shot Experiments.} \label{appendix:number-of-shots}

Number of demonstrations for few-shot experiments of all datasets is listed in Table \ref{table:number_of_shots}. For generation tasks and classification tasks,  We follow the expert-written prompts from Super-NaturalInstruction~\citep{wang2022super}. For reasoning tasks, MMLU and BBH, we follow the standard prompts that they propose in their paper or open-source code.

\begin{table*}
  \centering
\footnotesize
  \scalebox{0.63}{
  \begin{tabular}{l|llllll|lll|ll|ll}
    \toprule
    \multirow{2}*{\textbf{{}}} & \multicolumn{2}{c}{\textbf{\normalsize{Summarization}}}  & \multicolumn{2}{c}{\textbf{\normalsize{Data-to-Text}}} & \multicolumn{2}{c}{\textbf{\normalsize{Translation}}} & \multicolumn{3}{c}{\textbf{\normalsize{Classification}}} & \multicolumn{2}{c}{\textbf{\normalsize{Reasoning}}} &
    \multicolumn{2}{c}{\textbf{\normalsize{Evaluation Suits}}} \\\specialrule{0em}{2pt}{2pt}
    & \textbf{XSUM} & \textbf{CNN} & \textbf{WebNLG} & \textbf{E2E NLG}  & \textbf{RO $\rightarrow$ EN} & \textbf{IT$\rightarrow$ JA} & \textbf{YELP Review}  & \textbf{COPA}  & \textbf{WSC} & \textbf{GSM8K} & \textbf{SVAMP} & \textbf{MMLU} & \textbf{BBH}\\
    \midrule
    {\#shots}  & 3 & 2 & 3 & 2 & 3 & 3 & 3 & 3 & 3 & 5 & 5 & 5 & 3 \\
    \bottomrule
  \end{tabular}}
\caption{\small{Number of shots used for \emph{few-shot} experiments.}}
  \label{table:number_of_shots}
\end{table*}

\paragraph{Test Set Statistics.} \label{appendix:benchmark_statistics}

As mentioned in the main paper, we sample a subset of the test set for efficient evaluation. In \Cref{table:benchmarks_statistics}, we show the exact numbers of testing samples we used for each task.

\begin{table*}
  \centering
\footnotesize
  \scalebox{0.7}{
  \begin{tabular}{l|llllll|lll|ll}
    \toprule
    \multirow{2}*{\textbf{{}}} & \multicolumn{2}{c}{\textbf{\normalsize{Summarization}}}  & \multicolumn{2}{c}{\textbf{\normalsize{Data-to-Text}}} & \multicolumn{2}{c}{\textbf{\normalsize{Translation}}} & \multicolumn{3}{c}{\textbf{\normalsize{Classification}}} & \multicolumn{2}{c}{\textbf{\normalsize{Reasoning}}} \\\specialrule{0em}{2pt}{2pt}
    & XSUM & CNN & WebNLG & E2E NLG  & RO $\rightarrow$ EN & IT$\rightarrow$ JA & \textbf{YELP Review}  & \textbf{COPA}  & \textbf{WSC} & \textbf{GSM8K} & \textbf{SVAMP}\\
    \midrule
    {\#test samples}  & $1000$ & $950$ & $1000$ & $1000$ & $1000$ & $1000$ &  $1000$& $496$ & $285$ & $1319$ & $1000$ \\
    \bottomrule
  \end{tabular}}
  
\caption{\small{Test set statistics.}}
\label{table:benchmarks_statistics}
\end{table*}

\paragraph{Prompt Modifier Prompts.} \label{appendix:prompts-for-prompt-modifier} Here, we also provide the prompt used in the prompt modifier. The prompt is as follows:

\begin{itemize}
  \item Modifying instructions: \texttt{Generate 5 variations of the following instruction while keeping the semantic meaning. Keep the generated instructions as declarative. Wrap each with <START> and <END>.}.
  \item Modifying open-ended QA pairs: \texttt{Generate 5 variations of the following example to make them more representative. Keep the format as Input: and Output:. Wrap each with <START> and <END>.}.
  \item Modifying MCQ pairs: \texttt{Generate 5 variations of the following multiple-choice question and the answer to make them more representative. Keep the format as multiple-choice question and the answer. Keep the format as Input: and Output:. Wrap each with <START> and <END>.}.
\end{itemize}

\paragraph{Extended Experimental Details.} \label{appendix:extended-experimental-details}
For OpenAI API models, ChatGPT (gpt-3.5-turbo-0613) with chat completion mode and text-davinci-002 with text completion mode were called at temperature 0.6. For open-source baselines, Vicuna v1.5 13B was used with a window size of 1024. We use Nucleus Sampling \cite{Holtzman2020The} as our decoding strategy for all the models with a p value of $0.9$.

\subsection{Why the Discriminator Works?} \label{sec:why-discriminator-works}
We further conduct experiments (\Cref{table:freezing-discriminator}) to verify whether the prompt modifier module work as expected. Specifically, we remove the discriminator and only employ a prompt modifier to repeatedly optimize the prompt.

\begin{table}[H]
  \centering
\footnotesize
  \scalebox{0.75}{
  \begin{tabular}{l|ccccc}
    \toprule
    & \textbf{WebNLG} & \textbf{RO} $\rightarrow$ \textbf{EN} & \textbf{YELP} & \textbf{GSM8K}\\
    \midrule
    {Vicuna 13B}  & 52.5 & 72.1 & 71.0 & 40.7 \\
    {\adv{}  \emph{w.o. discriminator}}  & 50.1 & 71.4 & 72.1 & 40.2 \\
    {\adv{}}  & 59.3 & 73.4 & 73.6 & 43.9 \\
    \midrule
    {ChatGPT}  & 60.9 & 78.8 & 69.8 & 79.4 \\
    {\adv{}  \emph{w.o. discriminator}}  & 61.2 & 77.4 & 64.5 & 71.6 \\
    {\adv{}}  & 63.6 & 80.4 & 71.9 & 82.3 \\
    \bottomrule
  \end{tabular}}
  
\caption{\small{Experimental results with Vicuna and ChatGPT with \adv{} when being removed the discriminator.}}
  \label{table:freezing-discriminator}
\vspace{-5mm}
\end{table}

In most cases, removing the discriminator and relying solely on the prompt modifier under Vicuna and ChatGPT leads to a decline in performance. This observation highlights the importance of the discriminator and adversarial loss in the optimization process.

\subsection{Extended Experiments} \label{appendix:extra-analysis}

\paragraph{Choosing Different Models for the Discriminator and Generator.} \Cref{table:chatgpt_vicuna} presents our experimental results.

\begin{table*}
  \centering
\footnotesize
  \scalebox{0.8}{
  \begin{tabular}{l|l|ccccc}
    \toprule
    Group & Models & \textbf{WebNLG} & \textbf{LIRO} & \textbf{YELP} & \textbf{GSM8K}\\
    \midrule
    & text-davinci-002 & 65.4 & 81.2 & 74.4 & 50.8 \\
    \adv{} & vicuna 13B & 59.3 & 73.4 & 73.6 & 43.9 \\
    & ChatGPT & 63.6 & 80.4 & 71.9 & 82.3 \\
    \midrule
    & {vicuna 7B (D) + vicuna 13B (G)}  & 61.1 & 72.9 & 72.4 &  41.9 \\
    & {vicuna 7B (D) + text-davinci-002 (G)}  & 62.3 & 77.9 & 71.2 & 44.1 \\
    Stronger Generator & {vicuna 7B (D) + ChatGPT (G)}  & 62.1 & 78.8 & 70.6 &  80.9  \\
    & {vicuna 13B (D) + text-davinci-002 (G)}  & 63.9 & 79.6 & 72.9 &  49.8  \\
    & {vicuna 13B (D) + ChatGPT (G)}  & 63.6 & 78.9 & 71.4 &  81.2  \\
    \midrule
    & {vicuna 13B (D) + vicuna 7B (G)}  & 58.9 & 73.3 & 63.6 &  22.3  \\
    Stronger Discriminator & {text-davinci-002 (D) + vicuna 7B (G)}  & 58.5 & 72.2 & 62.8 &  20.6  \\
    & {text-davinci-002 (D) + vicuna 13B (G)}  & 58.8 & 72.4 & 73.4 &  44.2  \\
    \bottomrule
  \end{tabular}
}  
\caption{\small{Experiments of using different discriminators and generators.}}
  \label{table:chatgpt_vicuna}
\end{table*}

\paragraph{Reliability of The Results.} We rerun our experiments with \adv{} three times on WebNLG, RO $\rightarrow$ EN, YELP, GSM8K. The results are presented in \Cref{table:result-reliability}.

\begin{table}[H]
  \centering
\footnotesize
  \scalebox{0.7}{
  \begin{tabular}{l|ccccc}
    \toprule
    & \textbf{WebNLG} & \textbf{RO} $\rightarrow$ \textbf{EN} & \textbf{YELP} & \textbf{GSM8K}\\
    \midrule
    {Vicuna 13B} &  59.3/59.2/59.5 & 73.4/74.1/73.2 & 73.6/73.6/73.5 & 43.9/44.3/44.1 \\
    {ChatGPT} & 63.6/63.5/63.8 & 80.4/80.6/80.6 & 71.9/71.8/71.9 & 82.3/82.5/82.2 \\
    \bottomrule
  \end{tabular}}
  
\caption{\small{Our experimental results with \adv{} on three different runs.}}
  \label{table:result-reliability}
\end{table}

The results clearly demonstrate that \adv{} consistently delivers stable outcomes, thereby highlighting its reliability in faithfully reproducing our experimental findings.

\paragraph{Providing More Feedback to the Prompt Modifier.}
We conducted an experiment that involved integrating the most successful prompts from previous iterations as feedback for the next iteration. In this process, we utilized previous best-performing prompts, namely $P_1, P_2, ..., P_k$, as inputs to the prompt constructor module in order to generate the $(k+1)$-th prompt, denoted as $\{P_1, ..., P_k\}$. The template for optimizing task instruction is shown as follows, similar to the prompt for optimizing demonstrations.

\texttt{Diversify the task instruction to be clearer. Keep the task instruction as declarative.}

\texttt{Task instruction: $P_0$}

\texttt{Improved task instruction: $P_1$}

\texttt{…}

\texttt{Task instruction: $P_{k-1}$}

\texttt{Improved task instruction: $P_k$}

\texttt{Task instruction: $P_k$}

\texttt{Improved task instruction:}

We applied the method to four representative tasks WebNLG, RO $\rightarrow$ EN, YELP, GSM8K using both Vicuna and ChatGPT models. The obtained results for are illustrated in \Cref{table:prompt-modifier-with-feedback}.

\begin{table}[H]
  \centering
\footnotesize
  \scalebox{0.65}{
  \begin{tabular}{l|ccccc}
    \toprule
    & \textbf{WebNLG} & \textbf{RO} $\rightarrow$ \textbf{EN} & \textbf{YELP} & \textbf{GSM8K}\\
    \midrule
    {Vicuna 13B}  & 52.5 & 72.1 & 71.0 & 40.7 \\
    {\adv{} (prompt modifier with history)}  & 56.9 & 74.0 & 74.2 & 42.2 \\
    {\adv{}}  & 59.3 & 73.4 & 73.6 & 43.9 \\
    \midrule
    {ChatGPT}  & 60.9 & 78.8 & 69.8 & 79.4 \\
    {\adv{} (prompt modifier with history)}  & 62.1 & 79.8 & 72.1 & 80.9 \\
    {\adv{}}  & 63.6 & 80.4 & 71.9 & 82.3 \\
    \bottomrule
  \end{tabular}}
\caption{\small{Experimental results with Vicuna and ChatGPT with the feedback to the prompt modifier.}}
  \label{table:prompt-modifier-with-feedback}
\end{table}

In the case of Vicuna, incorporating additional feedback into the prompt modifier proves effective for tasks such as translation and classification. However, this approach falls short when applied to data-to-text and reasoning tasks. On the other hand, for ChatGPT, augmenting the prompt modifier with more feedback does not yield improved performance. This can be attributed to ChatGPT's strong zero-shot prompt capabilities, which outshine its ability to perform effectively with few-shot prompts.

\paragraph{Ablation Studies on Number of Generated Samples $r$.} We investigate whether generating fewer / more samples in each prompt modification  would affect the model's performance. Due to the limited resources, we only conducted the experiment on the WebNLG and GSM8k dataset, with $r\in\{1,3,5,10,20\}$. The results are shown in \Cref{fig:ablation-r}. We observe that increasing $r$ lead to comparable results.

\paragraph{Why Might too Many Iterations $T$ or Samples $m$ Harm the Performance of Models?}
We observed this phenomenon in the experiments and were also curious about it. We hypothesize that first, training with too many iterations can cause the model to be overfitting to the task, leading to worse performance on the test samples. Second, \adv{}, a specialized form of in-context learning, plays a crucial role in enhancing the performance of LLMs by enabling them to learn from the training examples and generate improved prompts. While in-context learning holds great promise, it is essential to acknowledge that increasing the number of training examples does not necessarily guarantee better performance. As demonstrated by \cite{min-etal-2022-rethinking}, a critical threshold exists for the number of training examples, and surpassing this threshold leads to a decline in performance. Thus, in our specific settings, augmenting the training examples did not yield better results.

Given its inherent complexity and non-deterministic nature, we have put forward a hyper-parameter tuning approach, presented in \Cref{table:hyper-parameter-search}, aimed at determining these hyper-parameters for new configuration settings.

\paragraph{Prompt Modifier Temperature.} 
Lastly, we examine the influence of the generation temperature for the prompt modifier. Ideally, the prompt modifier should have enough diversity to generate potential improvements for the prompts of both the generator and discriminator. Intuitively, this means we should not use greedy decoding with a temperature of 0 for the prompt modifier. As demonstrated in Figure~\ref{fig:temperature}, a temperature of 0.6 works well, providing a sufficiently large search space while still generating high-quality prompts.

\begin{figure}[t]
\centering
\begin{minipage}[t]{0.48\textwidth}
\centering
\includegraphics[width=0.95\textwidth]{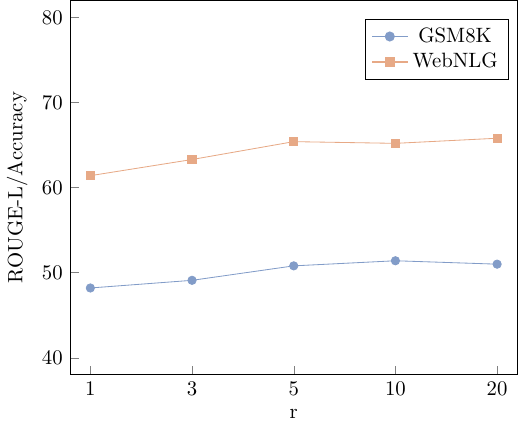}
\caption{Ablation study on the number of sampled prompt $r$.}
\label{fig:ablation-r}
\end{minipage}
\begin{minipage}[t]{0.48\textwidth}
\centering
\includegraphics[width=0.95\textwidth]{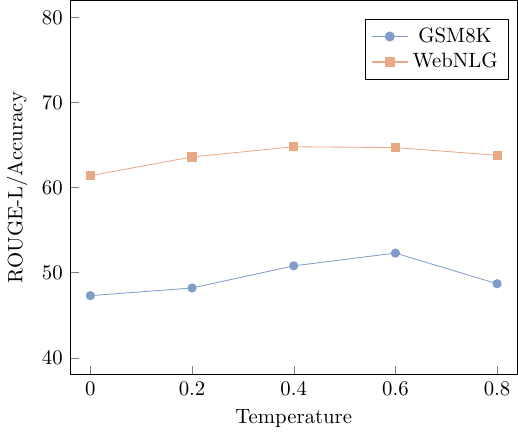}
\caption{Ablation study on temperature of the prompt modifier. }
\label{fig:temperature}
\end{minipage}
\end{figure}


Here we present the detailed results of human evaluation on generated instructions and  demonstrations respectively. Details are shown in Table \ref{table:human}. text-davinci-002 and ChatGPT achieve similar performance with the zero-shot prompt modifier, while Vicuna performs a little bit worse but also achieves an acceptable correctness ($\ge 80$).

\begin{table}[ht]
  \centering
  \scalebox{0.65}{
  \begin{tabular}{lcccc}
    \toprule
Model & 30 instructions & 70 demonstrations & Overall \\ \midrule
text-davinci-002 & 93.3 & 85.7 & 88.0 \\
Vicuna v1.5 & 90.0 & 80.0 & 83.0 \\
ChatGPT & \textbf{96.7} & \textbf{88.6} & \textbf{91.0} \\
    \bottomrule
  \end{tabular}}
  \caption{Human evaluation results for each specific type of modifications.}
  \label{table:human}
  
\end{table}

\paragraph{Using More Tasks To Select Hyperparameters.} 
\textcolor{black}{\Cref{table:hyper-parameter-search-extra} presents our hyperparameter selection experiments for Vicuna, expanding on the tasks included in \Cref{table:hyper-parameter-search} by adding summarization and translation tasks. We observe that ($T=3$, $m=5$) is also the best combination.} 

\begin{table*}
\centering
\begin{subtable}{1\textwidth}
\centering
\scalebox{.65}{
\begin{tabular}{lcccc}
\toprule
$m$ \textbackslash \ $T$ & $T=1$ & $T=3$ &  $T=5$ \\
\midrule
$m=1$ & 19.92 / 52.5 / 71.88 / 40.0 / 50.0 & 20.89 / 53.8 / 72.02 / 43.8 / 55.9 & 21.01 / 53.8 / 72.66 / 42.5 / 54.4 \\
$m=3$ & 20.68 / 55.0 / 70.15 / 42.5 / 48.5 & 21.11 / 60.0 / 71.89 / 43.8 / 54.4 & 21.80 / 57.5 / 72.15 / 45.0 / 51.5 \\
$m=5$ & 22.20 / 55.0 / 72.78 / 41.4 / 48.5 & \textbf{22.18 / 61.3 / 73.06 / 45.0 / 54.4} & 21.88 / 57.5 / 72.44 / 42.5 / 51.5 \\
$m=10$ & 18.89 / 53.8 / 66.82 / 42.5 / 52.9 & 19.03 / 55.0 / 68.90 / 42.5 / 50.0 & 19.90 / 55.0 / 71.56 / 41.3 / 45.6 \\
\bottomrule
\end{tabular}
}
\caption{Vicuna as $G$,  Vicuna as $D$. }
\label{tab:vicuna-vicuna}
\end{subtable}
\caption{Ablation studies on number of iterations $T$ and number of samples used per iteration $m$. The results are ROUGE-L / Acc / Acc scores on XSUM / WebNLG / LIRO / GSM8K / MMLU.}
\label{table:hyper-parameter-search-extra}
\end{table*}

\paragraph{More Qualitative Analysis.}
We show one example in \Cref{fig:qualitative_analysis}. We also show an additional case of qualitative analysis on Yelp. As shown in \ref{fig:example_cla}, the optimization follows a similar pattern with that on the data-to-text task. 

\begin{figure*}
    \centering
\includegraphics[width=.95\textwidth]{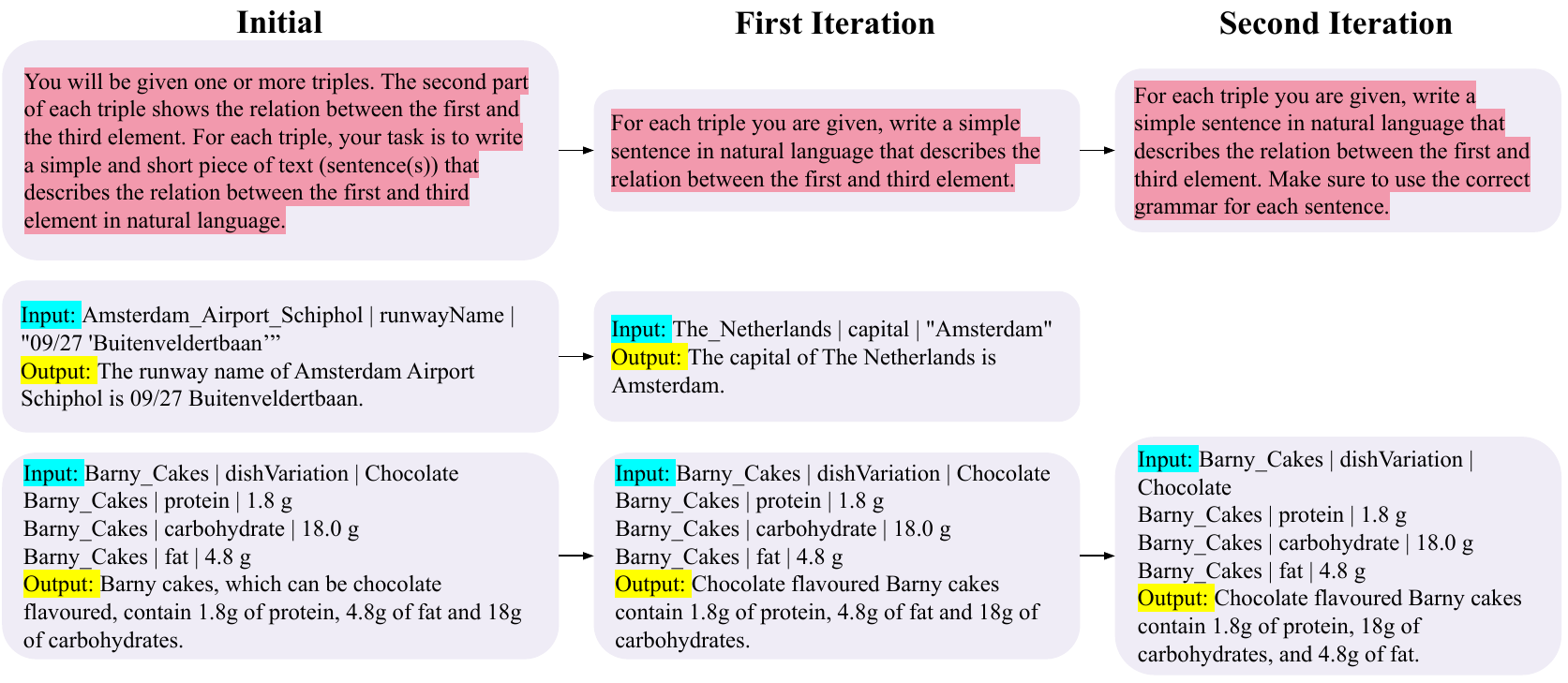}
    \caption{Optimization for the prompt on the data-to-text task WebNLG.}
\label{fig:qualitative_analysis}
\end{figure*}

\begin{figure*}
\centering
\includegraphics[width=1\textwidth]{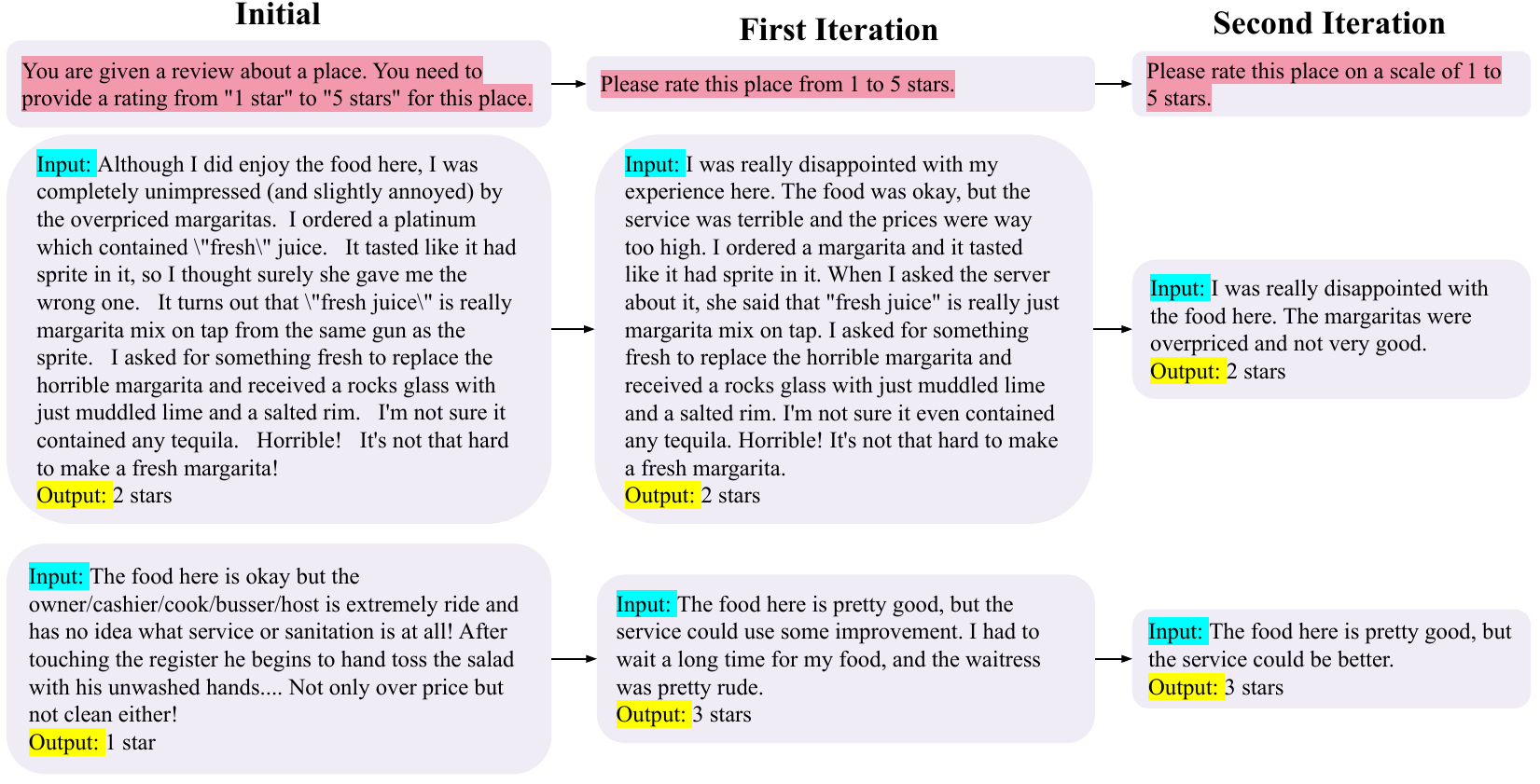}
\caption{Qualitative analysis on the classification task Yelp.}
\label{fig:example_cla}
\end{figure*}

\paragraph{Detailed Results on MMLU.}\label{sec:apen_mmlu}
In Figure \ref{fig:chat_mmlu}, we show the detailed results on MMLU with ChatGPT. As shown in the graph, \adv{} achieves significant improvements on most tasks. 

\begin{figure*}
    \centering
    \includegraphics[width=\linewidth]{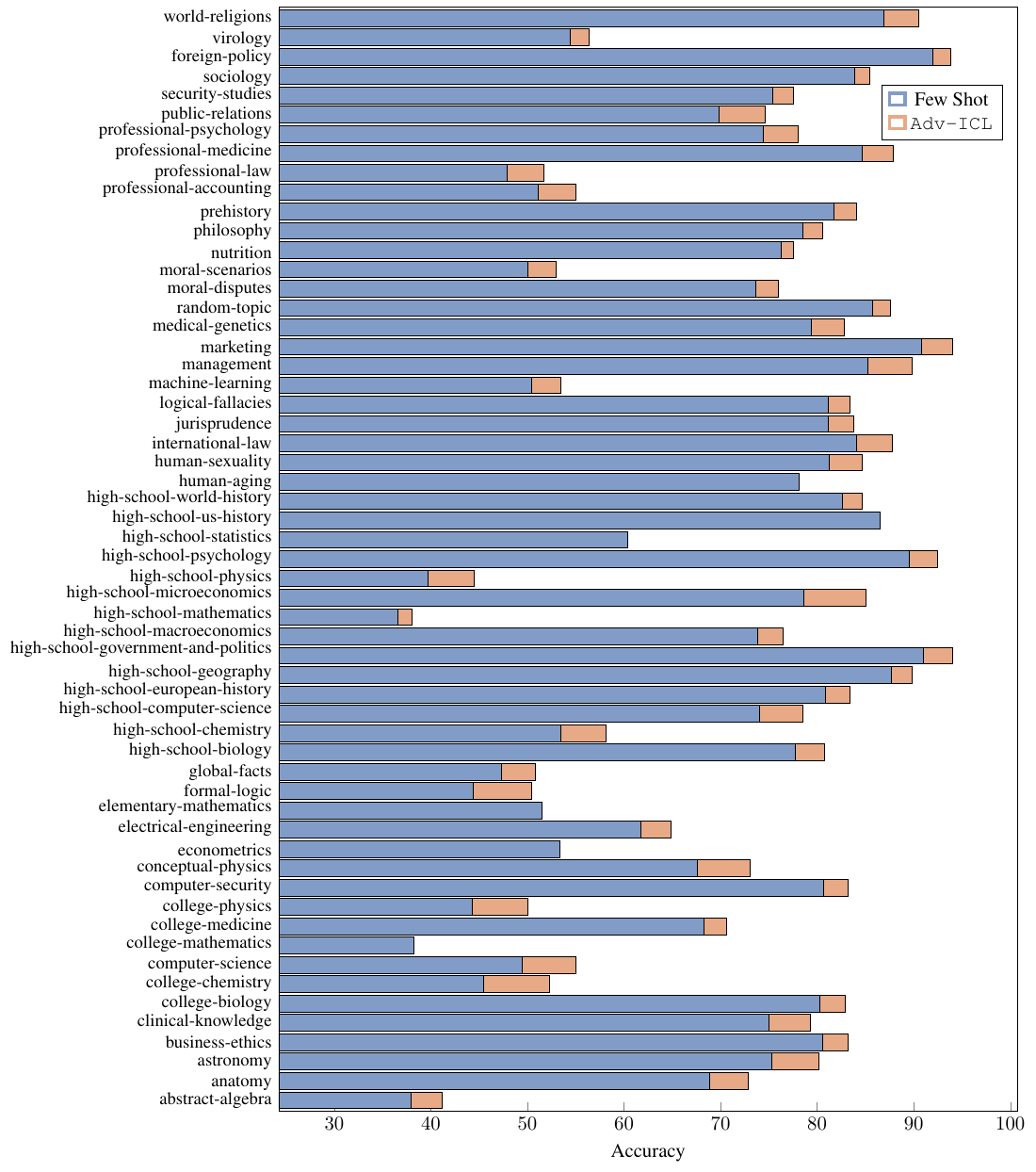}
\caption{Results on MMLU using ChatGPT, where the y-axis begins at $25\%$, representing the baseline of random choices.}
\label{fig:chat_mmlu}
\end{figure*}

\paragraph{Detailed Results on BBH.}\label{appendix:bbh_full}
In \Cref{fig:bbh_full}, we show the full results of ChatGPT on BIG-Bench Hard using 5-shot Chain-of-Thought prompting. The baseline achieves an average of 68.2\% accuracy while \adv{} reaches an average of accuracy of 70.6\% and never performs worse than the baseline.

\begin{figure*}
    \centering
\includegraphics[width=0.7\textwidth]{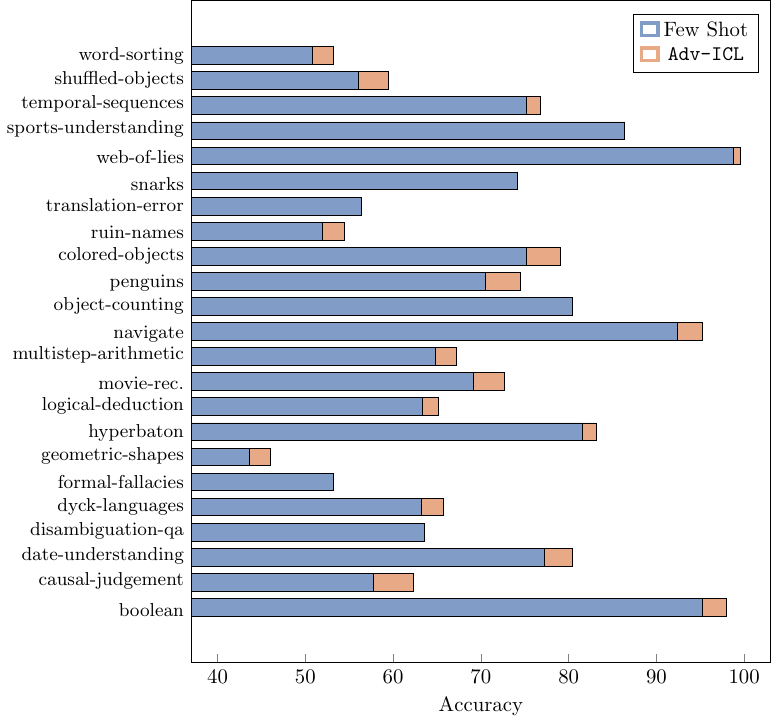}
    \caption{Full results on BBH using ChatGPT and 5-shot CoT prompting.}
    \label{fig:bbh_full}
\end{figure*}

\end{document}